\newif\ifAIRecArxivStyle
\AIRecArxivStyletrue

\ifAIRecArxivStyle
\documentclass[11pt]{article}
\else
\documentclass[mnsc,sglanonrev]{informs4_WP}
\fi

\ifAIRecArxivStyle
\usepackage{ai-recommendations-arxiv}
\else
\usepackage{ai-recommendations-informs}
\fi

\newcommand{\AIRecPaperTitle}{Strategic Advice in the Age of Personal AI}
\newcommand{\AIRecPaperKeywords}{personal AI; decision support; advice taking; human-AI collaboration; recommendation design; trust; behavioral operations; forecasting experiments}
\newcommand{\AIRecPaperAbstract}{Personal AI assistants are changing how individuals use advice. We study how an advisor should design its recommendation in anticipation of stochastic consultation with personal AI whose recommendation is predictable. Personal AI enters through two dimensions: consultation probability and relative trust, which captures the relative influence personal AI receives when consulted. In the baseline model, the advisor optimally counteracts the personal AI signal. Counteraction increases with consultation probability but is hump-shaped in relative trust. The advisor’s minimized loss is hump-shaped in consultation probability, vanishing when personal AI is never or always consulted. Greater relative trust in personal AI increases the irreducible loss arising from stochastic consultation. We extend the analysis to partial predictability and costly recommendation adjustment, characterizing their effects on optimal recommendations and minimized loss. The framework also accommodates richer information structures, including settings in which personal AI is perceived as having access to private information relevant to the task. We introduce an online forecasting experiment that examines how participants obtain personal AI advice and combine it with an advisor’s recommendation and their initial judgments. Participants place weight on all three inputs. When access requires an additional action, some participants do not seek personal AI advice, while some others attempt to obtain it without success. Together, these findings highlight two distinct aspects of personal AI use: whether advice is obtained and how much weight it receives when available.
}

\ifAIRecArxivStyle
\title{\AIRecPaperTitle}
\author[1]{Yueyang Liu}
\affil[1]{Jones Graduate School of Business, Rice University}
\author[2]{Wichinpong Park Sinchaisri}
\affil[2]{Haas School of Business, University of California, Berkeley}
\affil[ ]{\textit {yueyl@rice.edu, parksinchaisri@berkeley.edu}}
\else
\RUNAUTHOR{Liu and Sinchaisri}
\RUNTITLE{\AIRecPaperTitle}

\TITLE{\AIRecPaperTitle}
\ARTICLEAUTHORS{%
\AUTHOR{Yueyang Liu}
\AFF{Jones Graduate School of Business, Rice University, \EMAIL{yueyl@rice.edu}}

\AUTHOR{Wichinpong Park Sinchaisri}
\AFF{Haas School of Business, University of California, Berkeley, \EMAIL{parksinchaisri@berkeley.edu}}
}

\ABSTRACT{\AIRecPaperAbstract}

\KEYWORDS{\AIRecPaperKeywords}
\fi

\begin{document}

\AIRecBeginMainBibunit

\maketitle

\ifAIRecArxivStyle
\begin{abstract}
\AIRecPaperAbstract
\end{abstract}
\else
\vspace{-0.6cm}
\fi

\section{Introduction}\label{sec:introduction}

Personal AI assistants are becoming a
routine part of how individuals seek and use advice at work and in everyday
life \citep{brynjolfsson2025generative,chatterji2025people,miyazaki2024public}.
At work, an employee may receive a recommendation from an enterprise
decision-support system and then ask a personal AI assistant for a second
opinion. An individual may also receive professional advice about a financial,
medical, or legal decision and then consult ChatGPT, Claude, or Gemini. To
design effective recommendations, advisors must anticipate not only the extent
to which individuals will follow their advice
\citep{bastani2026improving,caro2023believing,dietvorst2018overcoming,grand2026best,ibanez2018discretionary,sun2022predicting,van2010ordering},
but also how they will combine it with advice from personal AI.

Two features jointly shape the advisor’s strategic environment with personal AI. First, consultation is stochastic. Individuals do not always consult personal AI, and an advisor typically does not know ex ante whether its recommendation will be evaluated on its own or aggregated with a personal AI signal. Second, personal AI recommendations are often predictable. Whereas a human expert’s advice may depend on private information or individual judgment, the output of a widely available AI system can often be anticipated, approximated, or even reproduced by querying the same tool. Together, these features create a strategic setting in which the advisor faces an outside source whose content may be predictable, but whose use is uncertain.

We develop a theoretical framework to study how an advisor should design its recommendation in such a setting. A decision maker seeks to estimate an unknown parameter $\theta$. The decision maker receives a recommendation $S_A$ from the advisor and, with probability $p$, also consults personal AI and receives a recommendation $S_P$. The decision maker combines prior beliefs with the available recommendations according to their perceived precision, or trust, to form a point estimate of $\theta$. Anticipating this decision rule, the advisor chooses a single recommendation before knowing whether personal AI will be consulted. The advisor seeks to steer the final decision toward a target $r$, which may represent the true state or an institutional objective.

We characterize the advisor's optimal recommendation and minimized loss, focusing on how they vary with two dimensions of personal AI use: the consultation probability $p$ and the relative trust $t$, the latter of which measures the influence of personal AI relative to the combined influence of the advisor and the decision maker's prior beliefs.

The advisor optimally counteracts the personal AI signal, with counteraction intensity increasing in consultation probability $p$ but hump-shaped in relative trust $t$. A higher consultation probability strengthens the incentive to offset personal AI's influence. The effect of relative trust is less straightforward: little counteraction is needed when personal AI has little influence, while counteraction becomes ineffective when its influence is overwhelming. Counteraction is therefore strongest at an intermediate level of relative trust.

The advisor's minimized loss depends on the disagreement between the personal AI recommendation and the advisor's target and is hump-shaped in $p$. The advisor can adjust its recommendation to offset the influence of the prior because that influence is predictable. Similarly, when personal AI is never consulted or always consulted, consultation status is certain, so the advisor can align the final decision with its target and loss vanishes. When consultation is uncertain, however, a single recommendation generally cannot align the final decision with the target in both states. This creates an irreducible loss that peaks at an intermediate consultation probability.

The minimized loss depends on trust only through the relative trust $t$ and increases with $t$. Greater relative trust in personal AI widens the gap between the recommendations that would be optimal with and without consultation. This increases the irreducible loss arising from stochastic consultation.

We extend the baseline analysis to partially predictable personal AI recommendations and costly recommendation adjustment. When the advisor’s information only partially predicts the personal AI recommendation, the advisor counteracts its predictable component, while the unpredictable component generates an additional loss. When deviating from a reference recommendation incurs a cost, the optimal recommendation lies between the costless optimum and the reference recommendation, moving toward the reference as adjustment costs increase.

The framework also accommodates richer information structures, such as settings in which the decision maker perceives personal AI as having access to private, user-specific information relevant to the task. The decision rule and the baseline characterizations of optimal recommendations and minimized loss continue to apply with appropriately adjusted perceived precisions and relative trust. As the perceived importance of private information increases, personal AI receives greater relative trust, increasing the advisor’s minimized loss.

We complement the analysis with the City Manager Game, an online forecasting experiment involving 69 participants. In each round, participants make an initial forecast about a municipal task, receive controlled numerical advice, and submit a final forecast. Advice is presented as coming from enterprise AI, personal AI, or a human expert. The experiment examines reliance on individual and competing sources, consultation when personal-AI access requires an additional action, and the performance of a fixed counteraction policy. In rounds with both AI sources, the fitted aggregation rule places weight on each recommendation and retains roughly one-third of the weight on the initial judgment. In rounds requiring access, participants differ in whether they seek personal-AI advice, and an attempt does not always result in obtaining a recommendation. The evidence thus distinguishes the use of displayed advice from the decision and ability to obtain it.

We implement a naïve counteraction policy that differs from the theoretically optimal recommendation. It exactly offsets personal AI’s error when participants place equal weight on the two recommendations and no weight on their initial judgments. In our sample, the policy lowers average final forecast error relative to unadjusted advice, but error remains. A better-performing policy may require estimating and accounting for the weights on initial judgments and each recommendation, as our theoretical analysis suggests.

\section{Related Literature}
\label{sec:related-literature}

Our paper contributes to research on advice taking, algorithmic decision support, and human-AI
collaboration. Across these literatures, a central insight is that the value of advice depends not only on its statistical quality, but also on how the decision maker incorporates it into the final decision. We study a setting in which this incorporation problem is complicated by personal AI: the decision maker may receive a recommendation from a focal advisor, but may also consult an outside AI assistant before acting.

A foundational stream of behavioral research on advice taking is the Judge--Advisor System \citep{harvey1997taking,yaniv2004receiving,bonaccio2006advice,soll2009strategies,bailey2023meta}. This stream of work documents systematic patterns in advice aggregation, including egocentric discounting and its variation across contexts. Decision makers often revise their initial judgments only partially after receiving advice, and the extent of discounting varies with their prior knowledge, uncertainty, perceived expertise of the advisor, and task stakes. While this literature carefully characterizes how individuals integrate advice given fixed information sources, it typically treats advisors as passive and non-strategic. It does, however, provide experimental support for modeling advice aggregation as linear weighted averaging; in our framework, these weights correspond to perceived precision or trust in each source.

A growing operations literature studies human-aware decision support, often from the perspective of a focal algorithmic advisor whose recommendations are only partially followed by human decision makers. Field and experimental evidence shows that managers and workers frequently deviate from algorithmic recommendations, and that these deviations depend on discretion, beliefs, incentives, and task context \citep{van2010ordering,ibanez2018discretionary,caro2023believing}. Recent work further argues that decision-support systems should account for predictable human responses rather than optimize only for full compliance \citep{sun2022predicting,grand2026best,bastani2026improving}. Building on this insight, we model the advisor as a strategic actor with a target $r$, rather than as a passive information provider. The recommendation itself becomes a strategic instrument that tries to shift the decision maker's final decision toward the advisor's objective. Our setting also introduces a different source of deviation: the decision maker may not merely underweight or depart from the advisor's recommendation, but may combine it with personal AI. This changes the advisor's problem from designing recommendations for partial adherence to designing recommendations under possible aggregation with a competing and predictable outside signal.

A related literature studies trust and reliance on algorithmic advice. Prior work documents both algorithm aversion, in which users reduce reliance after observing algorithmic errors, and algorithm appreciation, in which users prefer algorithmic advice under some conditions \citep{dietvorst2015algorithm,dietvorst2018overcoming,logg2019algorithm}. Subsequent research examines mechanisms that shape trust and reliance, including interface cues, explanations, prior experience, the ability to modify algorithmic outputs, and the interpretability of recommendations \citep{parasuraman1997humans,lee2004trust,bansal2021does,naiseh2023different,shin2021effects,dietvorst2018overcoming,bastani2026improving}. We build on this literature by representing trust as perceived precision in the decision maker's decision rule. Our contribution is to show that trust matters not only because it changes reliance on the focal advisor, but also because it determines the advisor's exposure to a competing outside AI signal. In our setting, the relevant object is therefore relative influence: how much weight personal AI receives compared with the advisor and the decision maker's own prior judgment.

Our model also relates more broadly to forecast aggregation and strategic communication. Forecast-aggregation research studies how multiple forecasts or signals should be combined when information sources differ in precision or contain overlapping information \citep{arieli2018robust,palley2019extracting,steyvers2022bayesian}. We take a different approach: rather than solving for the optimal aggregation rule, we take the decision maker's aggregation behavior as a primitive and study how a focal advisor should design its recommendation in anticipation of that behavior. Strategic communication research studies how an informed or biased sender communicates with a receiver whose action depends on the message, either through nonbinding cheap talk \citep{crawford1982strategic} or through committed information design in Bayesian persuasion \citep{kamenica2011bayesian}. Our setting differs because the decision maker does not strategically decode the advisor's message or respond to a designed information structure. Instead, she treats recommendations as informative signals with perceived precision, while the advisor anticipates this behavioral aggregation rule. This formulation allows us to study recommendation design when the same advice may be interpreted either alone or jointly with a predictable personal-AI signal.

Finally, our paper contributes to emerging work on generative AI. Recent empirical research documents how generative AI affects productivity, labor-market outcomes, task execution, and the use of expertise \citep{brynjolfsson2025generative,demirci2025ai,hui2024short}, while related work highlights the organizational consequences of AI for authority, accountability, and how users interpret AI output \citep{allen2022algorithm,anthony2023collaborating,leonardi2026knowing}. We complement this work by developing a theoretical model of personal-AI adoption as a strategic environment for a focal advisor. In our setting, personal AI is selectively consulted but its recommendation is predictable, so the advisor competes with an outside signal that is uncertain in use but strategically salient in content. We decompose personal-AI adoption into an extensive margin, the probability of consultation, and an intensive margin, the relative influence or trust placed on personal AI conditional on consultation. This decomposition allows us to characterize how personal AI changes optimal recommendation design and advisor vulnerability.

\section{Model}\label{sec:model}

Consider a decision maker who seeks to estimate an unknown parameter $\theta$. The decision maker holds a prior belief $\theta \sim \mathcal{N}(\mu_0,\sigma_0^2)$. The decision maker receives a recommendation from a focal advisor, denoted by $S_A$, and believes that the advisor's recommendation is a noisy observation of $\theta$ given by
\[
S_A = \theta + \epsilon_A, \qquad \epsilon_A \sim \mathcal{N}(0,\sigma_A^2).
\]

With probability $p\in[0,1]$, the decision maker also consults a personal AI and receives a recommendation $S_P$, which she believes is given by
\[
S_P = \theta + \epsilon_P, \qquad \epsilon_P \sim \mathcal{N}(0,\sigma_P^2).
\]

After observing the available recommendations, the decision maker forms a final decision $D$ equal to the posterior mean of $\theta$ under these beliefs.

Define
\[
r_A = \frac{\sigma_0^2}{\sigma_A^2}, \qquad
r_P = \frac{\sigma_0^2}{\sigma_P^2}, \qquad
t = \frac{r_P}{1+r_A}.
\]
The parameters $r_A$ and $r_P$ capture the perceived precision of the advisor and personal AI relative to the decision maker's prior. The parameter $t$ is the relative trust ratio. The equations above describe the decision maker's belief structure and do not impose how the recommendations are actually generated. In particular, the advisor may choose $S_A$ strategically.

The advisor seeks to make the decision maker's final decision $D$ as close as possible to a target $r$. Formally, the advisor chooses its recommendation $S_A$ to minimize
\[
\min_{S_A} \; E\!\left[(D-r)^2\right].
\]
The target $r$ may represent the ground truth, an imperfect estimate of the ground truth, or it may reflect an institutional or strategic objective.

In the baseline model, the advisor knows $p$, $\mu_0$, $r_A$, and $r_P$, and observes or can perfectly predict the personal AI recommendation $S_P$ before choosing $S_A$. The advisor chooses $S_A$ before knowing whether the decision maker will in fact consult personal AI. We later relax the full-predictability assumption and allow $S_P$ to be only partially predictable.

\section{Advisor Response to Personal AI Adoption}\label{sec:advisor-response}

This section characterizes the advisor's strategic response to personal AI adoption. We begin by deriving the decision maker's decision rule as a function of the available recommendations, highlighting the extensive and intensive margins through which personal AI affects the advisor's problem. We then characterize the advisor's optimal recommendation and show how it decomposes into target alignment, correction for the decision maker's prior, and strategic counteraction against the personal-AI recommendation. We next study how counteraction intensity and the advisor's resulting loss vary with personal-AI consultation probability $p$ and relative trust $t$. Finally, we relax the baseline assumptions by allowing the personal-AI recommendation to be only partially predictable and by introducing a cost of adjusting the advisor's recommendation.

\subsection{Decision Maker Aggregation}

The decision maker forms the final decision through Bayesian updating, and personal AI affects the decision through two margins. The extensive margin is adoption, captured by $p$: the personal-AI recommendation enters the decision only when personal AI is consulted. The intensive margin is relative trust, captured by $t$: conditional on consultation, it captures the influence of personal AI relative to the combined weight of the advisor and the decision maker's prior. Together, these two margins define the strategic environment the advisor must navigate.

If personal AI is not consulted, the decision maker's final decision is
\[
D_0(S_A)=\frac{\mu_0+r_A S_A}{1+r_A}.
\]
If personal AI is consulted, the final decision is
\[
D_1(S_A,S_P)=\frac{\mu_0+r_A S_A+r_P S_P}{1+r_A+r_P}.
\]
These two decision rules together show how personal AI changes the decision maker's final decision. When personal AI is consulted, its recommendation enters the decision directly and also reduces the relative weight placed on the advisor's recommendation. This creates the strategic incentive for the advisor to adjust its recommendation in anticipation of personal-AI consultation.

\subsection{Optimal Recommendation and Strategic Counteraction}

Before analyzing the advisor's strategic response to personal-AI consultation, consider a benchmark in which the advisor ignores the possibility that personal AI may be consulted and assumes that the decision maker has no prior knowledge of the task, corresponding to a zero-mean uninformative prior. In this benchmark, the advisor believes that its recommendation fully determines the decision maker's final decision. The optimal strategy is therefore simply to align the recommendation with the target:
\[
S_A=r.
\]

We now turn to the case in which the advisor anticipates and strategically responds to possible personal-AI consultation. Under the baseline information structure, the advisor observes or perfectly predicts $S_P$ before choosing $S_A$, but does not know whether personal AI will actually be consulted.

\begin{proposition}[Optimal strategic recommendation]\label{prop:optimal-recommendation}
The advisor's optimal recommendation is
\[
S_A^*
=
r+\frac{1}{r_A}(r-\mu_0)
+\Delta(p,r_A,r_P)(r-S_P),
\]
where the counteraction intensity is
\[
\Delta(p,r_A,r_P)
=
\frac{p r_P(1+r_A)^2}
{r_A\left[(1-p)(1+r_A+r_P)^2+p(1+r_A)^2\right]}
=
\left(1+\frac{1}{r_A}\right)
\frac{pt}{(1-p)(1+t)^2+p}.
\]
The corresponding minimized loss is
\[
L^*
=
\frac{p(1-p)r_P^2}
{(1-p)(1+r_A+r_P)^2+p(1+r_A)^2}
(r-S_P)^2
=
\frac{p(1-p)t^2}
{(1-p)(1+t)^2+p}
(r-S_P)^2.
\]
\end{proposition}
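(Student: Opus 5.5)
The loss is a strictly convex quadratic in the single scalar $S_A$, so I will compute the first-order condition and substitute back. Write
\[
L(S_A)=(1-p)\bigl(D_0(S_A)-r\bigr)^2+p\bigl(D_1(S_A,S_P)-r\bigr)^2,
\]
where $D_0$ and $D_1$ are affine in $S_A$ with slopes $a_0=r_A/(1+r_A)$ and $a_1=r_A/(1+r_A+r_P)$.

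Rather than differentiate directly, I will work with the state-wise ideal recommendations. Let $x_0$ solve $D_0(x_0)=r$ and $x_1$ solve $D_1(x_1,S_P)=r$. This gives
\[
x_0=r+\frac{1}{r_A}(r-\mu_0),\qquad
x_1=x_0+\frac{r_P}{r_A}(r-S_P).
\]
The loss then becomes
\[
L(S_A)=(1-p)a_0^2(S_A-x_0)^2+pa_1^2(S_A-x_1)^2.
\]

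Minimizing this weighted sum of squares gives the precision-weighted average
\[
S_A^*=x_0+\frac{pa_1^2}{(1-p)a_0^2+pa_1^2}(x_1-x_0).
\]
Substituting $x_1-x_0=(r_P/r_A)(r-S_P)$ and multiplying numerator and denominator by $(1+r_A)^2(1+r_A+r_P)^2/r_A^2$ yields the stated $\Delta$. Writing $r_P=t(1+r_A)$, so that $1+r_A+r_P=(1+r_A)(1+t)$, and cancelling $(1+r_A)^2$ gives the second form, where the prefactor is $r_P/r_A=t(1+r_A)/r_A=t(1+1/r_A)$.

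For the minimized loss I will use the standard identity
\[
\min_x\bigl[w_0(x-x_0)^2+w_1(x-x_1)^2\bigr]=\frac{w_0w_1}{w_0+w_1}(x_1-x_0)^2.
\]
Here $w_0=(1-p)a_0^2$ and $w_1=pa_1^2$, and $x_1-x_0=(r_P/r_A)(r-S_P)$. The $r_A^2$ factors cancel, and clearing denominators by $(1+r_A)^2(1+r_A+r_P)^2$ gives
\[
L^*=\frac{p(1-p)r_P^2}{(1-p)(1+r_A+r_P)^2+p(1+r_A)^2}(r-S_P)^2,
\]
with the $t$-form following from the same substitution.

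The only real obstacle is the algebraic bookkeeping when converting between the $(r_A,r_P)$ and $t$ forms. The state-wise-ideal decomposition keeps this manageable.

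I will also note the degenerate endpoints. At $p=0$ or $p=1$ one weight vanishes, and the minimizer is $x_0$ or $x_1$, which is consistent with the formula since $\Delta=0$ or $\Delta=r_P/r_A$. The formula stays well defined because the denominator is positive for all $p\in[0,1]$.
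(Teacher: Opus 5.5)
Your proof is correct, and it takes a somewhat different route from the paper.

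The paper writes the loss in terms of $x=r-S_A$ and differentiates. It solves the first-order condition for $x^*$. It then obtains $L^*$ by substituting $S_A^*$ back into the loss and simplifying $(1-p)p^2r_P^2(1+r_A)^2+p(1-p)^2r_P^2(1+r_A+r_P)^2$ over the squared denominator.

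You instead recenter each state's error at its state-wise ideal recommendation $x_0$ or $x_1$, so the loss becomes $w_0(S_A-x_0)^2+w_1(S_A-x_1)^2$. The optimizer is then a weighted average. The minimum value follows from the identity $\frac{w_0w_1}{w_0+w_1}(x_1-x_0)^2$, with no back-substitution needed.

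What your route buys is structural transparency:
\begin{itemize}
\item The prior-correction term is exactly $x_0-r$.
\item The counteraction term is the fraction $w_1/(w_0+w_1)$ of the gap $x_1-x_0=(r_P/r_A)(r-S_P)$.
\item The residual loss is visibly driven only by that gap. This makes immediate the paper's later interpretive claims that $L^*$ does not depend on $\mu_0$, depends on $S_P$ only through $(r-S_P)^2$, and vanishes at $p\in\{0,1\}$, where one weight is zero.
\end{itemize}

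The paper's computation is more mechanical but fully self-contained, whereas yours relies on the easily verified weighted-sum-of-squares identity. Both arguments implicitly use $r_A>0$, so that $a_0,a_1>0$ and dividing by $r_A$ is legitimate.
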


The optimal recommendation consists of three components. The first term aligns the recommendation with the target $r$. The second corrects for the difference between the decision maker's prior mean and the advisor's target. The third is a strategic counteraction term. When $S_P\neq r$, the advisor shifts its recommendation in the opposite direction from the personal-AI recommendation to offset its influence on the final decision. The coefficient $\Delta$ measures the intensity of this counteraction.

We next characterize how counteraction intensity $\Delta$ varies with personal-AI consultation probability $p$, relative trust $t$, and the advisor's perceived precision $r_A$.

\begin{corollary}[Counteraction intensity]\label{cor:counteraction}
The counteraction intensity $\Delta$ satisfies the following properties.
\begin{enumerate}
    \item \textbf{Monotonicity in adoption.} Fix $r_A$ and $r_P>0$. Then $\Delta$ is strictly increasing in $p$. Moreover,
    \[
    \Delta\to 0 \quad \text{as } p\downarrow 0,
    \qquad
    \Delta\to \frac{r_P}{r_A} \quad \text{as } p\uparrow 1.
    \]

    \item \textbf{Non-monotonicity in relative trust.} Fix $p\in(0,1)$ and $r_A$. Then $\Delta$ is hump-shaped in $t$, with unique maximizer
    \[
    t^*=\frac{1}{\sqrt{1-p}}.
    \]
    Moreover,
    \[
    \Delta\to 0 \quad \text{as } t\downarrow 0,
    \qquad
    \Delta\to 0 \quad \text{as } t\uparrow \infty.
    \]

    \item \textbf{Monotonicity in advisor precision, holding relative trust fixed.} Fix $p>0$ and $t>0$. Then $\Delta$ is strictly decreasing in $r_A$.
\end{enumerate}
\end{corollary}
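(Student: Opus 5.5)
We work directly from the closed form of $\Delta$ in Proposition~\ref{prop:optimal-recommendation}, using whichever of its two expressions suits each part. Throughout, write $g(p,t)=\frac{pt}{(1-p)(1+t)^2+p}$, so that $\Delta=(1+1/r_A)\,g(p,t)$. The denominator equals $(1+t)^2-p\,t(2+t)$. For $p\in[0,1]$ it is at least $1+t>0$, because $(1+t)^2-t(2+t)=1$ and the expression is decreasing in $p$. Hence $\Delta$ is well defined and smooth on the relevant domain.

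For part 1, we fix $r_A$ and $r_P>0$, which fixes $t>0$, and write
\[
g=\frac{pt}{(1+t)^2-p\,t(2+t)}.
\]
The numerator is strictly increasing and positive in $p$ (for $p>0$), and the positive denominator is strictly decreasing in $p$. So $g$, and therefore $\Delta$, is strictly increasing in $p$. Alternatively, dividing numerator and denominator by $p$ gives $t/[(1-p)(1+t)^2/p+1]$, where $(1-p)/p$ is strictly decreasing. The limits follow by continuity. At $p=0$ the numerator vanishes, so $\Delta\to 0$. At $p=1$ we have $g=t$, so
\[
\Delta\to\Big(1+\frac{1}{r_A}\Big)\frac{r_P}{1+r_A}=\frac{r_P}{r_A}.
\]

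For part 2, we fix $p\in(0,1)$ and $r_A$, so that the prefactor is a positive constant and only $h(t)=t/[(1-p)(1+t)^2+p]$ matters. Differentiating, the sign of $h'(t)$ equals the sign of
\[
(1-p)(1+t)^2+p-2t(1-p)(1+t)=(1-p)(1-t^2)+p=1-(1-p)t^2 .
\]
This is positive for $t<1/\sqrt{1-p}$ and negative for $t>1/\sqrt{1-p}$. Hence $t^*=1/\sqrt{1-p}$ is the unique maximizer and $h$ is hump-shaped. The limits are immediate: as $t\downarrow 0$ the numerator tends to $0$ while the denominator tends to $1$, and as $t\to\infty$ the ratio behaves like $1/((1-p)t)\to 0$, which uses $p<1$.

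For part 3, we hold $p>0$ and $t>0$ fixed, so $g(p,t)>0$ is constant. Then $\Delta=(1+1/r_A)\,g$ is strictly decreasing in $r_A$ because $1/r_A$ is.

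The only real obstacle is interpretive rather than computational. Part 3 requires reading "fix $t$" as allowing $r_P$ to co-move as $r_P=t(1+r_A)$, which makes the factored form the right one to use. Similarly, in part 2 one must vary $t$ with $r_A$ fixed, i.e., through $r_P$. Once this is made explicit, every step reduces to the one-line sign computations above.
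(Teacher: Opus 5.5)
Your proof is correct and follows the paper's argument part by part. In $p$ you use the same increasing-numerator, decreasing-positive-denominator argument, and in $r_A$ the same factored form $(1+1/r_A)\,g(p,t)$. In $t$ you use the sign factor $1-(1-p)t^2$, whereas the paper gets the equivalent factor $1-(1-p)t_1t_2$ from the finite difference $h(t_2)-h(t_1)$ rather than a derivative. One small slip: the denominator $(1-p)(1+t)^2+p$ is bounded below by $1$ (its value at $p=1$), not by $1+t$, though only its positivity is used.
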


\begin{figure}[htb]
\centering
\includegraphics[scale=0.5]{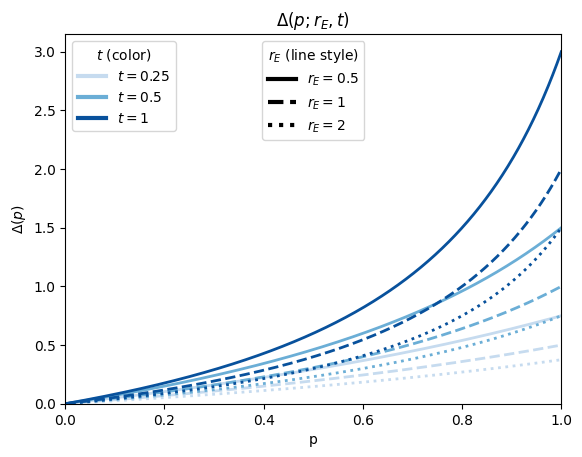}
\hspace{0.03\textwidth}
\includegraphics[scale=0.5]{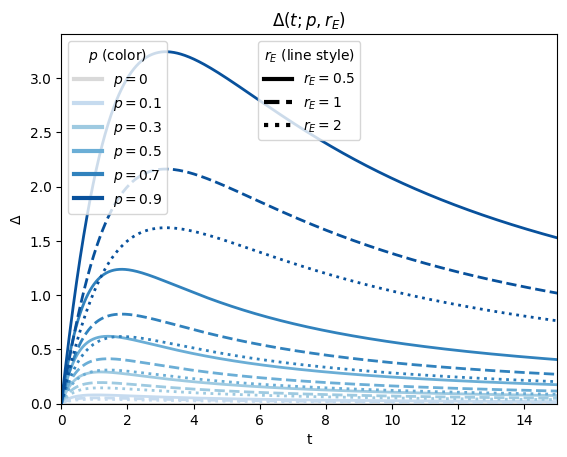}
\caption{Counteraction intensity $\Delta$ increases in adoption rate $p$ (left) and is hump-shaped in relative trust ratio $t$ for fixed $p\in(0,1)$ (right).}
\label{fig:counteraction}
\end{figure}

\paragraph{Monotonicity in adoption.}
When personal AI is rarely consulted, it rarely enters the decision rule, so the advisor has little need or incentive to adjust its recommendation in response to personal AI. As $p$ increases, personal AI is consulted more often, strengthening the advisor's incentive to counteract its influence on the final decision. In the limit as $p\to1$, personal AI is always consulted, and the counteraction intensity converges to $r_P/r_A$.

\paragraph{Non-monotonicity in relative trust.}
When $t$ is small, personal AI has little influence on the final decision, so there is little need or incentive to counteract it. As $t$ increases, personal AI becomes more influential when consulted, strengthening the advisor's incentive to counteract. However, a second force works in the opposite direction: as $t$ increases, counteraction becomes less effective. In the extreme as $t\to+\infty$, personal AI effectively dominates the final decision whenever it is consulted, so any finite counteraction by the advisor becomes ineffective. The advisor therefore optimally chooses no counteraction in the limit. Counteraction intensity is strongest at an intermediate level of relative trust, where personal AI is influential enough to create an incentive to counteract, while the advisor still has sufficient influence for counteraction to be effective. The maximizing level $t^*=1/\sqrt{1-p}$ increases with $p$: when consultation is more likely, the advisor has a stronger incentive to counteract even at higher levels of relative trust.

\paragraph{Advisor precision.}
Holding \(p\) and \(t\) fixed, when personal AI is consulted, the weight placed on personal AI remains unchanged, as does the combined weight placed on the decision maker’s prior and the advisor’s recommendation. Increasing \(r_A\), however, shifts more of this combined weight from the prior toward the advisor. As a result, each unit of adjustment in the advisor's recommendation has a larger effect on the decision maker's final decision. The advisor therefore needs a smaller recommendation adjustment to achieve the same corrective effect. Hence, the counteraction intensity $\Delta$ decreases with $r_A$.

\subsection{Advisor Loss}

We now turn from the advisor's optimal recommendation to the loss it incurs after responding optimally. This loss captures the advisor's remaining vulnerability after optimally adjusting its recommendation. Recall that the minimized loss is
\[
L^*
=
\frac{p(1-p)r_P^2}
{(1-p)(1+r_A+r_P)^2+p(1+r_A)^2}
(r-S_P)^2
=
\frac{p(1-p)t^2}
{(1-p)(1+t)^2+p}
(r-S_P)^2.
\]

Two features of this closed-form expression are worth highlighting. First, $L^*$ does not depend on the decision maker's prior mean $\mu_0$, or equivalently on the disagreement between $\mu_0$ and the advisor's target $r$. Prior misalignment affects the advisor’s optimal recommendation, giving rise to the prior-correction term. This term fully offsets the effect of prior misalignment on the final decision, so prior misalignment does not generate residual loss.

By contrast, the residual loss arises from stochastic personal-AI consultation and depends on the personal-AI recommendation only through the squared deviation $(r-S_P)^2$. The advisor must choose a single recommendation before knowing whether personal AI will be consulted, while the consultation and no-consultation states generally call for different optimal recommendations. A single recommendation therefore cannot implement both simultaneously, creating an irreducible loss under stochastic consultation. In particular, when $S_P=r$, the advisor incurs no loss.

We next characterize how the minimized loss varies with the personal-AI consultation probability $p$ and relative trust $t$, together with the corresponding implications for $r_A$ and $r_P$.

\begin{corollary}[Advisor loss]\label{cor:loss}
The minimized loss $L^*$ satisfies the following properties.
\begin{enumerate}
    \item \textbf{Non-monotonicity in adoption.} Fix $r_A$ and $r_P>0$. Then $L^*=0$ at $p=0$ and $p=1$, and $L^*>0$ for $p\in(0,1)$ whenever $S_P\neq r$. Furthermore, $L^*$ is single-peaked in $p$ and is maximized at
    \[
    p^*
    =
    \frac{1+r_A+r_P}{2(1+r_A)+r_P}
    =
    1-\frac{1}{t+2}.
    \]

    \item \textbf{Relative-trust sufficiency.} Fix $p\in(0,1)$. The minimized loss depends on $r_A$ and $r_P$ only through the relative trust $t$.

    \item \textbf{Monotonicity in relative trust.} Fix $p\in(0,1)$. The minimized loss $L^*$ is strictly increasing in $t$. Specifically,
    \[
    L^*\to0
    \quad\text{as }t\downarrow0,
    \qquad
    L^*\to p(r-S_P)^2
    \quad\text{as }t\uparrow+\infty.
    \]

    \item \textbf{Implications for perceived precision.} Fix $p\in(0,1)$ and $r_P$. Then $L^*$ is strictly decreasing in $r_A$. Fix $p\in(0,1)$ and $r_A$. Then $L^*$ is strictly increasing in $r_P$.
\end{enumerate}
\end{corollary}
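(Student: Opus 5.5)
Everything will be read off the closed form in Proposition~\ref{prop:optimal-recommendation}, namely
\[
L^*=\frac{p(1-p)t^2}{(1-p)(1+t)^2+p}(r-S_P)^2,
\qquad t=\frac{r_P}{1+r_A}.
\]
Part~2 (relative-trust sufficiency) is immediate from this second expression. Dividing numerator and denominator of the first expression by $(1+r_A)^2$ shows the two forms agree, so $L^*$ depends on $(r_A,r_P)$ only through $t$. For the remaining parts I will write $c=(r-S_P)^2$ and rewrite the denominator as $(1+t)^2-p\,t(2+t)$, which is positive since $(1+t)^2>t(2+t)$.

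For part~1, I fix $t>0$, i.e.\ $r_P>0$. The values at the endpoints follow from the factor $p(1-p)$. For $p\in(0,1)$ the numerator is positive and the denominator is a positive convex combination, so $L^*>0$ whenever $c>0$. For single-peakedness I set $a=(1+t)^2$ and $b=t(2+t)=a-1$, and study $f(p)=p(1-p)/(a-bp)$. Differentiating gives
\[
\operatorname{sign} f'(p)=\operatorname{sign}\bigl[(1-2p)(a-bp)+b\,p(1-p)\bigr]
=\operatorname{sign}\bigl[a-2ap+bp^2\bigr].
\]
The quadratic $g(p)=bp^2-2ap+a$ satisfies $g(0)=a>0$ and $g(1)=b-a=-1<0$. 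It is convex with vertex $p=a/b>1$, so it is strictly decreasing on $[0,1]$ and has exactly one root in $(0,1)$. Hence $f$ increases and then decreases, which is single-peakedness. The root is
\[
p^*=\frac{a-\sqrt{a^2-ab}}{b}=\frac{a-\sqrt a}{b}=\frac{(1+t)t}{t(2+t)}=\frac{1+t}{2+t}=1-\frac{1}{t+2},
\]
and substituting $t=r_P/(1+r_A)$ gives the stated expression in $r_A,r_P$. The simplification relies on $a-b=1$, which makes $\sqrt{a^2-ab}=\sqrt a=1+t$.

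For part~3, I fix $p\in(0,1)$ and write
\[
L^*=p(1-p)c\cdot h(t),\qquad h(t)=\frac{t^2}{(1-p)(1+t)^2+p}.
\]
I will show $1/h$ is strictly decreasing in $t>0$:
\[
\frac1{h(t)}=(1-p)\Bigl(1+\frac1t\Bigr)^2+\frac{p}{t^2}.
\]
Both terms are strictly decreasing for $t>0$, so $1/h$ is strictly decreasing and $h$ is strictly increasing. The limits follow directly. As $t\downarrow0$, $h\to0$. As $t\to\infty$, $1/h\to1-p$, so $L^*\to p(1-p)c/(1-p)=p\,c$. This requires $c>0$ for the strict statement; when $S_P=r$ the loss is identically zero, a degenerate case I would flag.

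Part~4 then follows by the chain rule from part~3. The relative trust $t=r_P/(1+r_A)$ is strictly decreasing in $r_A$ and strictly increasing in $r_P$, and $L^*$ is strictly increasing in $t$. For the $r_A$ statement I also need $t>0$, i.e.\ $r_P>0$.

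The only step needing real care is the sign analysis for single-peakedness in part~1. Choosing the substitution $a-b=1$ turns both the sign check and the closed form of the root into the short computation above. Everything else is direct inspection of the closed form.
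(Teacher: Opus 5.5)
Your proof is correct and follows the paper's route. Everything is read off the closed form of $L^*$, Part 3 uses the same divide-by-$t^2$ argument showing the reciprocal is strictly decreasing, and Part 4 is the same composition with $t=r_P/(1+r_A)$. The only difference is cosmetic and sits in Part 1: the paper minimizes the strictly convex reciprocal $B/p+A/(1-p)$ and reads off $p^*=\sqrt{B}/(\sqrt{A}+\sqrt{B})$, whereas you sign the quadratic $g(p)=bp^2-2ap+a$, which is just $-p^2(1-p)^2$ times the derivative of that reciprocal after normalizing by $(1+r_A)^2$; your explicit flagging of $S_P\neq r$ and $r_P>0$ for the strict claims is a small improvement.
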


\begin{figure}[!htbp]
\centering
\includegraphics[scale=0.5]{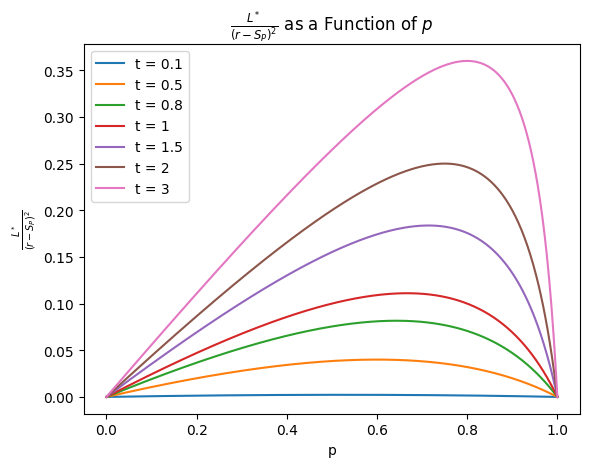}
\hspace{0.03\textwidth}
\includegraphics[scale=0.5]{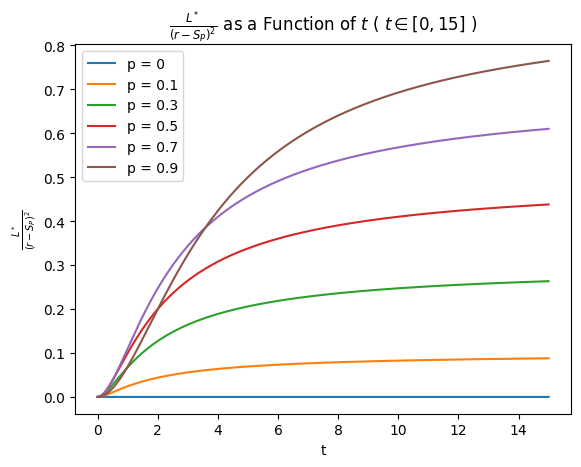}
\caption{Advisor loss $L^*$ is hump-shaped in adoption $p$: it peaks at an interior level of adoption and vanishes at $p=0$ and $p=1$ (left). It is strictly increasing in relative trust $t$ for fixed $p\in(0,1)$ (right).}
\label{fig:loss_p}
\end{figure}

\paragraph{Non-monotonicity in adoption.}
When $p=0$, personal AI is never consulted, so the advisor faces only the no-consultation decision rule and can choose its recommendation to align the final decision with the target $r$. When $p=1$, personal AI is always consulted, so the advisor faces only the consultation decision rule and can fully anticipate and offset the personal-AI recommendation, again choosing its recommendation to align the final decision with the target $r$. In both cases, the advisor incurs no residual loss. When $0<p<1$, however, the advisor does not know whether personal AI will enter the decision rule and must account for both the consultation and no-consultation states when choosing its recommendation. This strategic uncertainty creates unavoidable misalignment and generates a hump-shaped loss, which is maximized at an interior consultation probability.

\paragraph{Relative trust.}
The sufficiency result shows that the advisor's loss depends on $r_A$ and $r_P$ only through the relative trust $t$. When $t=0$, personal AI has no influence on the final decision even when it is consulted, so the advisor can choose its recommendation to align the final decision with the target $r$ regardless of consultation, and the minimized loss is zero. As $t$ increases, personal AI receives greater weight when consulted, widening the gap between the recommendations that would be optimal with and without personal-AI consultation. Because the advisor must use a single recommendation across both states, this widening gap increases the cost of stochastic consultation, and the minimized loss rises with $t$. At the other extreme, as $t\to+\infty$, personal AI effectively determines the final decision whenever it is consulted, and counteraction becomes ineffective. The advisor can still align the no-consultation decision with its target, but bears the full disagreement cost whenever consultation occurs. Consequently, $L^*\to p(r-S_P)^2$.

\paragraph{Implications for perceived precision.}
The implications for $r_A$ and $r_P$ follow directly from their effects on the relative trust ratio $t$.

\subsection{Partially Predictable Personal AI}\label{sec:partial-predictability}

The baseline analysis assumes that the advisor observes or perfectly predicts \(S_P\). We now relax this assumption and allow the advisor to observe only partial information \(H\) about the personal-AI recommendation before choosing \(S_A\). The same strategic logic carries over under partial predictability: the advisor counteracts the predictable component of personal AI, while residual unpredictability generates an additional loss that cannot be eliminated through recommendation design.

Conditional on \(H\), the advisor solves
\[
\min_{S_A}\; E\!\left[(D-r)^2\mid H\right],
\]
taking the decision maker's perceived aggregation rule as given.

\begin{proposition}[Optimal recommendation under partial predictability]\label{prop:partial-predictability}
The optimal recommendation is
\[
S_A^*
=
r+\frac{1}{r_A}(r-\mu_0)
+
\Delta(p,r_A,r_P)
\left(
r-E[S_P\mid H]
\right),
\]
where
\[
\Delta(p,r_A,r_P)
=
\frac{p r_P(1+r_A)^2}
{r_A\left[(1-p)(1+r_A+r_P)^2+p(1+r_A)^2\right]}.
\]
The corresponding minimized conditional loss is
\[
\begin{aligned}
L^*(H)
={}&
\frac{p(1-p)r_P^2}
{(1-p)(1+r_A+r_P)^2+p(1+r_A)^2}
\left(
r-E[S_P\mid H]
\right)^2\\
&+
p\left(\frac{r_P}{1+r_A+r_P}\right)^2
\operatorname{Var}(S_P\mid H).
\end{aligned}
\]
\end{proposition}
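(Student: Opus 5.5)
The proof reduces the conditional problem to the deterministic problem of Proposition~\ref{prop:optimal-recommendation} via a bias--variance decomposition. Write $m=E[S_P\mid H]$ and $v=\operatorname{Var}(S_P\mid H)$. I assume that consultation is independent of $S_P$ given $H$, so that conditional on $H$ personal AI is consulted with probability $p$. The conditional loss then splits as
\[
E[(D-r)^2\mid H]=(1-p)\bigl(D_0(S_A)-r\bigr)^2+p\,E\!\left[\bigl(D_1(S_A,S_P)-r\bigr)^2\mid H\right],
\]
where $D_0$ does not involve $S_P$.

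For the consultation term, the key observation is that $D_1$ is affine in $S_P$ with slope $w_P=r_P/(1+r_A+r_P)$, and $S_A$ is $H$-measurable. Hence
\[
E[(D_1-r)^2\mid H]=\bigl(D_1(S_A,m)-r\bigr)^2+w_P^2\,v .
\]
The variance piece does not depend on $S_A$. The advisor's problem therefore becomes exactly the baseline problem with $S_P$ replaced by the known number $m$, plus the additive constant $p\,w_P^2 v$.

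Next I invoke Proposition~\ref{prop:optimal-recommendation} with $S_P:=m$. Its proof uses only that $S_P$ is a fixed real number known to the advisor when $S_A$ is chosen, which is the case for $m$ here. This gives directly
\[
S_A^*=r+\tfrac{1}{r_A}(r-\mu_0)+\Delta(p,r_A,r_P)(r-m),
\]
and shows that the minimized value of the deterministic part equals the baseline $L^*$ with $(r-S_P)^2$ replaced by $(r-m)^2$. Adding $p\,w_P^2 v$ gives the stated $L^*(H)$.

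The main obstacle is not algebraic. It is making explicit the conditional-independence assumption between the consultation event and $S_P$ given $H$, and assuming $E[S_P^2\mid H]<\infty$ so that the expectations are finite. Once these are in place, the cross term $2\bigl(D_1(S_A,m)-r\bigr)w_P\,E[S_P-m\mid H]$ vanishes, and the rest is substitution into the earlier result. If a self-contained check is preferred, one can set the derivative of the quadratic in $S_A$ to zero: the first-order condition involves $S_P$ only through $m$, because the objective is quadratic with $S_A$-coefficient independent of $S_P$.
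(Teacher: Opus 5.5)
Your proposal is correct and follows the paper's route: assume consultation is independent of $S_P$ given $H$, write the consultation-state error as its value at $E[S_P\mid H]$ plus $\frac{r_P}{1+r_A+r_P}\bigl(S_P-E[S_P\mid H]\bigr)$, drop the vanishing cross term to isolate the $S_A$-free variance term, and apply Proposition~\ref{prop:optimal-recommendation} with $S_P$ replaced by $E[S_P\mid H]$. Your explicit finite-second-moment condition is a harmless refinement that the paper leaves implicit.
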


The optimal recommendation has the same form as in the fully predictable case, with \(S_P\) replaced by its predictable component \(E[S_P\mid H]\). Thus, the advisor counteracts the component of the personal-AI recommendation that it can predict. The first term in the minimized loss likewise has the same form as the baseline loss after replacing \(S_P\) by \(E[S_P\mid H]\). The second term is new and depends on the conditional variance of \(S_P\), capturing the irreducible loss from the component of the personal-AI recommendation that cannot be predicted when \(S_A\) is chosen. When \(S_P\) is perfectly predictable, this variance is zero and the baseline result is recovered. Thus, the advisor counteracts what it can predict and bears the additional loss associated with what it cannot.

\subsection{Costly Recommendation Adjustment}\label{subsec:costly-rec-adjust}

Section~\ref{sec:partial-predictability} relaxes the assumption that the personal-AI recommendation is perfectly predictable. We now consider a separate relaxation: adjusting the advisor's recommendation may itself be costly. Such costs can capture implementation effort, organizational frictions, or other penalties associated with departing from a benchmark recommendation. For an arbitrary benchmark recommendation $S_0$ and adjustment-cost parameter $\kappa \geq 0$, consider
\begin{equation*}
\min_{S_A}\; E\!\left[(D-r)^2\right] + \kappa(S_A-S_0)^2.
\end{equation*}

To characterize this problem, define
\[
W
=
(1-p)\left(\frac{r_A}{1+r_A}\right)^2
+
p\left(\frac{r_A}{1+r_A+r_P}\right)^2
=
\left(\frac{r_A}{1+r_A}\right)^2
\left[(1-p)+\frac{p}{(1+t)^2}\right].
\]
The coefficient $W$ measures the advisor's effective leverage over the final decision: it is the probability-weighted squared influence of the advisor's recommendation across the consultation and no-consultation states. Equivalently, $W$ determines the decision-loss cost of moving away from the costless optimum $S_A^*$. The advisor's objective under costly recommendation adjustment can therefore be written as
\[
E\!\left[(D-r)^2\right] + \kappa(S_A-S_0)^2
=
L^*+W(S_A-S_A^*)^2+\kappa(S_A-S_0)^2,
\]
where $S_A^*$ is the costless optimal recommendation and $L^*$ is the corresponding minimized loss under costless counteraction, both characterized in Proposition~\ref{prop:optimal-recommendation}. Here, $W(S_A-S_A^*)^2$ is the additional decision loss from moving away from the costless optimum $S_A^*$, whereas $\kappa(S_A-S_0)^2$ is the adjustment cost from moving away from the benchmark $S_0$.

\begin{proposition}[Optimal recommendation under costly adjustment]\label{prop:costly-adjustment}
For any benchmark recommendation $S_0$ and $\kappa \geq 0$, the optimal recommendation is
\[
S_{A,\kappa}^*
=
S_A^*+\lambda_\kappa(S_0-S_A^*),
\]
where the shrinkage coefficient $\lambda_\kappa=\kappa/(W+\kappa)$. The corresponding minimized total loss is
\[
L_\kappa^*
=
L^*+\alpha_\kappa(S_A^*-S_0)^2,
\]
where the additional-loss coefficient $\alpha_\kappa=\kappa W/(W+\kappa)$.
\end{proposition}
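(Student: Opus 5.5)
The plan is to first justify the decomposition of the expected decision loss already displayed before the statement, and then solve a scalar quadratic minimization. The starting point is that $D_0(S_A)$ and $D_1(S_A,S_P)$ are affine in $S_A$, with slopes $a_0=r_A/(1+r_A)$ and $a_1=r_A/(1+r_A+r_P)$. Hence
\[
E[(D-r)^2]=(1-p)\bigl(D_0(S_A)-r\bigr)^2+p\bigl(D_1(S_A,S_P)-r\bigr)^2
\]
is a quadratic in $S_A$ whose leading coefficient is $(1-p)a_0^2+pa_1^2=W$. Substituting $r_P=t(1+r_A)$ recovers the second expression for $W$.

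A quadratic with leading coefficient $W$ can be written exactly as its minimum value plus $W$ times the squared distance to its minimizer. By Proposition~\ref{prop:optimal-recommendation}, that minimizer is $S_A^*$ and the minimum value is $L^*$. This gives the identity
\[
E[(D-r)^2]=L^*+W(S_A-S_A^*)^2.
\]
The one degenerate case is $W=0$, which happens only if $r_A=0$. Since the paper implicitly takes $r_A>0$ (it divides by $r_A$), I will assume $W>0$.

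The penalized objective is then
\[
L^*+W(S_A-S_A^*)^2+\kappa(S_A-S_0)^2,
\]
which is strictly convex in $S_A$. Setting its derivative to zero gives $W(S_A-S_A^*)+\kappa(S_A-S_0)=0$. Solving,
\[
S_A=\frac{WS_A^*+\kappa S_0}{W+\kappa}=S_A^*+\frac{\kappa}{W+\kappa}(S_0-S_A^*),
\]
which is the claimed form with $\lambda_\kappa=\kappa/(W+\kappa)$.

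To evaluate the minimized objective, plug in $S_A-S_A^*=\lambda_\kappa(S_0-S_A^*)$ and $S_A-S_0=(1-\lambda_\kappa)(S_A^*-S_0)$. The result is
\[
L^*+\bigl[W\lambda_\kappa^2+\kappa(1-\lambda_\kappa)^2\bigr](S_A^*-S_0)^2.
\]
The bracket simplifies as
\[
\frac{W\kappa^2+\kappa W^2}{(W+\kappa)^2}=\frac{\kappa W}{W+\kappa}=\alpha_\kappa.
\]
This is the standard parallel-combination identity for two quadratic penalties.

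The only step needing real care is the first: confirming that the quadratic coefficient of the expected loss is exactly $W$, and that the completed square uses the same $S_A^*$ and $L^*$ as in Proposition~\ref{prop:optimal-recommendation}. I would verify this by direct expansion, or simply by noting that both forms are quadratics with the same leading coefficient, minimizer and minimum value. Everything after that is routine algebra.
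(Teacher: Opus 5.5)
Your proof is correct and follows the paper's argument essentially step for step. The paper also writes the decision loss as the exact quadratic expansion $L^*+W(S_A-S_A^*)^2$ around the costless optimum (using $L''=2W$ and $L'(S_A^*)=0$), then solves the first-order condition of the strictly convex penalized objective and substitutes back to obtain $\alpha_\kappa=\kappa W/(W+\kappa)$; your explicit remark that $W>0$ requires $r_A>0$ is a small point the paper leaves implicit.
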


The optimal recommendation is therefore a weighted average of the costless optimum and the benchmark:
\[
S_{A,\kappa}^*
=
(1-\lambda_\kappa)S_A^*+\lambda_\kappa S_0.
\]
The shrinkage coefficient $\lambda_\kappa$ measures how far the costly optimal recommendation moves from $S_A^*$ toward $S_0$. It is strictly increasing in $\kappa$: as $\kappa\downarrow0$, $\lambda_\kappa\to0$ and the costless optimum is recovered, whereas as $\kappa\to\infty$, $\lambda_\kappa\to1$ and the optimal recommendation converges to the benchmark.

The minimized total loss also has a simple decomposition. The first term, $L^*$, is the loss attainable under costless adjustment. The second term captures the additional loss generated by the tension between the costless optimum and the benchmark. For $W,\kappa>0$,
\[
0<\alpha_\kappa<\min\{W,\kappa\}.
\]
Moreover, $\alpha_\kappa$ is strictly increasing in $\kappa$: as $\kappa\downarrow0$, $\alpha_\kappa\to0$, and as $\kappa\to\infty$, $\alpha_\kappa\to W$. Thus, the costly-adjustment problem balances the decision-loss cost of departing from $S_A^*$ against the adjustment cost of departing from $S_0$.

The coefficients $\lambda_\kappa$ and $\alpha_\kappa$ also vary systematically with the underlying model parameters through $W$. The following corollary summarizes these comparative statics.

\begin{corollary}[Comparative statics of costly adjustment]\label{cor:costly-adjustment}
Fix $\kappa>0$.
\begin{enumerate}
    \item The shrinkage coefficient $\lambda_\kappa$ is strictly decreasing in $r_A$, strictly increasing in $p$ when $t>0$, and strictly increasing in $t$ when $p>0$.
    \item The additional-loss coefficient $\alpha_\kappa$ is strictly increasing in $r_A$, strictly decreasing in $p$ when $t>0$, and strictly decreasing in $t$ when $p>0$.
\end{enumerate}
\end{corollary}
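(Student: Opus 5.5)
The plan is to reduce both claims to the monotonicity of the leverage coefficient $W$ in each parameter. We write
\[
\lambda_\kappa=\frac{\kappa}{W+\kappa},\qquad \alpha_\kappa=\frac{\kappa W}{W+\kappa}=\frac{\kappa}{1+\kappa/W}.
\]
For fixed $\kappa>0$, $\lambda_\kappa$ is a strictly decreasing function of $W$ on $W>0$. Likewise $\alpha_\kappa$ is strictly increasing in $W$, since $\partial\alpha_\kappa/\partial W=\kappa^2/(W+\kappa)^2>0$.

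Hence the corollary follows once we show that $W$ is strictly increasing in $r_A$, strictly decreasing in $p$ when $t>0$, and strictly decreasing in $t$ when $p>0$. In each case, $t$ (respectively $p$ or $r_A$) is held fixed as the parameterization suggests. We also check $W>0$ throughout, so the maps above are well defined.

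For the monotonicity of $W$, we use the factored form from the text:
\[
W=\left(\frac{r_A}{1+r_A}\right)^2 g(p,t),\qquad g(p,t)=(1-p)+\frac{p}{(1+t)^2}.
\]
The key observation is that, holding $t$ fixed, $r_A$ enters only through the first factor. That factor is strictly increasing in $r_A>0$. Moreover $g>0$, because $g\ge \min\{1,(1+t)^{-2}\}>0$ for $p\in[0,1]$.

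For the other two parameters we differentiate $g$:
\[
\partial g/\partial p=-1+(1+t)^{-2}<0 \quad\text{iff } t>0,
\qquad
\partial g/\partial t=-2p(1+t)^{-3}<0 \quad\text{iff } p>0.
\]
These give the stated strict monotonicity of $W$, and composing with the monotone maps yields both items.

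The only delicate point is interpretive rather than computational. The claim about $r_A$ must be read with $t$ (and $p$) held fixed, matching Corollary~\ref{cor:counteraction}(3) and the factored expression for $W$. Under the alternative reading with $r_P$ fixed, $t$ would also move with $r_A$. Under that reading one would instead verify directly that
\[
p\,r_A^2/(1+r_A+r_P)^2 \quad\text{and}\quad (1-p)\,r_A^2/(1+r_A)^2
\]
are each increasing in $r_A$, which also holds. So we will note this in the proof: the conclusion is robust to the interpretation. We also need $r_A>0$, which holds since $r_A$ is a precision ratio.
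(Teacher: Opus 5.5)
Your proposal is correct and follows essentially the same route as the paper. The paper gives no separate appendix proof of this corollary; the text after it argues that $\lambda_\kappa$ falls and $\alpha_\kappa$ rises with the leverage $W$, and that $W$ increases in $r_A$ and decreases in $p$ and $t$. Your factored form $W=\bigl(r_A/(1+r_A)\bigr)^2\bigl[(1-p)+p/(1+t)^2\bigr]$ and the derivative checks make that argument rigorous, and your remark that the $r_A$ claim also holds with $r_P$ fixed is a sound addition.
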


These comparative statics follow from how $r_A$, $p$, and $t$ affect the advisor's effective leverage $W$. A larger $r_A$ increases the advisor's effective leverage and hence increases $W$, so moving away from $S_A^*$ has a larger effect on the final decision and generates a larger decision loss. The adjustment cost is then relatively less important, so $\lambda_\kappa$ falls and the costly optimal recommendation remains closer to $S_A^*$. At the same time, because deviations from $S_A^*$ are more consequential, the additional-loss coefficient $\alpha_\kappa$ rises. By contrast, a larger $p$ or $t$ reduces the advisor's effective leverage and hence $W$. The adjustment cost then becomes relatively more important, so $\lambda_\kappa$ rises and $\alpha_\kappa$ falls. These comparative statics concern the shrinkage coefficient and the additional-loss coefficient rather than the overall recommendation or total loss, because $S_A^*$, $L^*$, and the distance $S_A^*-S_0$ may themselves depend on the underlying parameters.

When recommendation adjustment is prohibitively costly,
the advisor may instead invest in perceived credibility.
Appendix~\ref{app:trust-investment} examines this alternative
response by fixing $S_A=r$, corresponding to the limiting case
$S_0=r$ and $\kappa\to\infty$, and allowing the advisor to
increase perceived trust at a cost.

\section{Perceived Private Information}

Our baseline model is sufficiently general to accommodate a range of information structures. In this section, we consider one important example in which the decision maker believes that personal AI has access to user-specific information or context that is unavailable to the focal advisor. Such information may include personal history, prior interactions, preferences, or other user-specific context that personal AI can access or utilize through personalization.

We decompose the unknown state as
\[
\theta = X + Y,
\]
where $X$ captures the component associated with common or task-level information and $Y$ captures the component associated with user-specific information. Let $\eta \in [0,1)$ denote the share of task uncertainty attributable to the user-specific component. When $\eta=0$, the model reduces to the baseline setting. As $\eta$ approaches one, essentially all task uncertainty is attributed to the user-specific component. We show that this richer information structure preserves the form of the baseline model, with the main results continuing to hold after replacing the baseline perceived precisions with their effective counterparts under private information.

\subsection{Private-Information Structure}

The decision maker holds a prior belief
\[
\theta = X+Y, \qquad X \perp Y,
\]
where
\[
X \sim \mathcal{N}(\mu_0,\sigma_X^2), \qquad
Y \sim \mathcal{N}(0,\sigma_Y^2), \qquad
\theta \sim \mathcal{N}(\mu_0,\sigma_0^2), \qquad
\sigma_0^2 = \sigma_X^2 + \sigma_Y^2.
\]

The decision maker believes that the advisor's recommendation is a noisy signal of the common, task-level component,
\[
S_A = X + \epsilon_A, \qquad \epsilon_A \sim \mathcal{N}(0,\sigma_A^2),
\]
whereas the personal AI recommendation is a noisy signal of the full state,
\[
S_P = \theta + \epsilon_P, \qquad \epsilon_P \sim \mathcal{N}(0,\sigma_P^2).
\]
Thus, personal AI is perceived as being able to combine task-level information with user-specific information that is unavailable to the focal advisor.

We define
\[
\eta
= \frac{\sigma_Y^2}{\sigma_X^2+\sigma_Y^2}
= \frac{\sigma_Y^2}{\sigma_0^2}
\]
as the share of task uncertainty attributable to the user-specific component. When $\eta=0$, $Y$ is degenerate and the information structure reduces to the baseline model. A larger $\eta$ means that a greater share of uncertainty is associated with information that the advisor does not observe but that personal AI is perceived as being able to utilize.

The zero mean of $Y$ is without loss of generality. If the user-specific component instead has a nonzero mean $\mu_Y$, the decision maker's aggregation rules depend on the advisor recommendation through $S_A+\mu_Y$. The optimal recommendation $S_A^*$ therefore shifts by $-\mu_Y$, while the effective perceived precisions, relative trust ratio, counteraction intensity, and minimized loss remain unchanged. We therefore normalize $Y$ to have zero mean.

One interpretation of the user-specific component $Y$ is that the decision maker has a set of personal or contextual attributes $Z_1,\ldots,Z_n$, but does not know how these attributes map into the decision task. Let $f(Z_1,\ldots,Z_n)$ denote their task-relevant contribution. Personal AI may use prior interactions, personalization, or other user-specific data to infer or utilize this mapping, whereas the focal advisor does not have access to the same information. The personal AI recommendation $S_P$ can therefore be interpreted as an estimate of the full state that incorporates this user-specific contribution, while $\epsilon_P$ captures imperfections in that estimate. Under our normalization, $Y$ represents this task-relevant contribution relative to its prior mean.

\subsection{Effective Trust and Advisor Vulnerability}

We now characterize how the private-information structure changes the effective perceived precisions that enter the baseline model. The key observation is that the structure of the model is unchanged: private information changes the effective values of $r_A$ and $r_P$, but not the form of the advisor's problem.

\begin{proposition}[Effective perceived precisions under private information]\label{prop:private-information-precision}
For $\eta \in [0,1)$, the conclusions of Proposition~\ref{prop:optimal-recommendation} and Corollaries~\ref{cor:counteraction} and~\ref{cor:loss} continue to hold with $r_A$ and $r_P$ replaced by
\[
r_{A,\eta}=(1-\eta)r_A,
\qquad
r_{P,\eta}=\bigl[1+\eta(1-\eta)r_A\bigr]r_P.
\]
The corresponding relative trust ratio is
\[
t_\eta
=\frac{r_{P,\eta}}{1+r_{A,\eta}}
=\frac{\bigl[1+\eta(1-\eta)r_A\bigr]r_P}
{1+(1-\eta)r_A}.
\]
\end{proposition}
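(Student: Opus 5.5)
The plan is to show that, under the private-information structure, the decision maker's two aggregation rules have exactly the baseline form
\[
D_0=\frac{\mu_0+r_{A,\eta}S_A}{1+r_{A,\eta}},\qquad
D_1=\frac{\mu_0+r_{A,\eta}S_A+r_{P,\eta}S_P}{1+r_{A,\eta}+r_{P,\eta}}.
\]
Once this is established, the advisor's problem is literally the baseline problem with $(r_A,r_P)$ relabeled. Proposition~\ref{prop:optimal-recommendation} and Corollaries~\ref{cor:counteraction} and~\ref{cor:loss} then apply verbatim, and their proofs use only these two linear rules, the prior mean $\mu_0$, and $p$. The only side conditions to check are $r_{A,\eta}>0$ and $r_{P,\eta}>0$, which hold because $\eta<1$, and $r_{P,\eta}\ge0$ in general. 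The expression for $t_\eta$ is then just the definition $t=r_P/(1+r_A)$ with the substituted precisions.

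\textbf{No-consultation rule.} I will compute $E[\theta\mid S_A]$ in the jointly Gaussian model. Since $S_A=X+\epsilon_A$ and $Y$ is independent with mean zero,
\[
E[\theta\mid S_A]=\mu_0+\frac{\sigma_X^2}{\sigma_X^2+\sigma_A^2}(S_A-\mu_0).
\]
Using $\sigma_X^2=(1-\eta)\sigma_0^2$ and dividing numerator and denominator by $\sigma_A^2$, the weight becomes $(1-\eta)r_A/(1+(1-\eta)r_A)$. This gives $D_0$ with $r_{A,\eta}=(1-\eta)r_A$.

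\textbf{Consultation rule.} I will update sequentially. Conditional on $S_A$, $\theta$ is Gaussian with mean $m_1=D_0$ and variance
\[
v_1=\frac{\sigma_X^2\sigma_A^2}{\sigma_X^2+\sigma_A^2}+\sigma_Y^2 .
\]
Since $S_P=\theta+\epsilon_P$ is a direct noisy signal of $\theta$ with $\epsilon_P$ independent of everything else, standard normal updating gives
\[
D_1=\frac{m_1+(v_1/\sigma_P^2)S_P}{1+v_1/\sigma_P^2}.
\]
Substituting $m_1=(\mu_0+r_{A,\eta}S_A)/(1+r_{A,\eta})$ and multiplying numerator and denominator by $1+r_{A,\eta}$ yields the baseline form with
\[
r_{P,\eta}=(1+r_{A,\eta})\frac{v_1}{\sigma_0^2}\,r_P .
\]

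\textbf{Simplifying $r_{P,\eta}$ (main obstacle).} The one genuinely non-routine step is reducing this expression to $[1+\eta(1-\eta)r_A]r_P$. Normalizing by $\sigma_0^2$,
\[
\frac{v_1}{\sigma_0^2}=\frac{1-\eta}{1+(1-\eta)r_A}+\eta .
\]
Multiplying by $1+(1-\eta)r_A$ gives $(1-\eta)+\eta+\eta(1-\eta)r_A=1+\eta(1-\eta)r_A$, which is the claimed expression.

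\textbf{Sanity check and caveat.} At $\eta=0$ the formulas reduce to $(r_A,r_P)$, recovering the baseline model. Finally, the corollaries' statements are to be read in terms of the effective parameters. For instance, "fix $r_A$, vary $r_P$" now means varying $r_{P,\eta}$, which is monotone in $r_P$ for fixed $\eta$ and $r_A$. Each corollary therefore transfers directly and needs no new argument.
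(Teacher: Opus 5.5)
Your proposal is correct and follows the paper's proof essentially step for step: compute $X\mid S_A$, add $\sigma_Y^2$ to obtain $\theta\mid S_A$, then treat $S_P$ as an independent Gaussian measurement of $\theta$ and read off the effective precisions, noting $r_{A,\eta}>0$ so the baseline results transfer. Your simplification via $r_{P,\eta}=(1+r_{A,\eta})(v_1/\sigma_0^2)\,r_P$ is a slightly tidier route to $[1+\eta(1-\eta)r_A]r_P$ than the paper's direct expansion, but the argument is the same.
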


Thus, the private-information extension does not require a new solution to the advisor's problem; it changes only the effective perceived precisions that enter the baseline formulas.

\begin{lemma}[Private information and relative trust]\label{lem:private-information-trust}
For $r_A>0$ and $r_P>0$, $r_{A,\eta}$ is strictly decreasing in $\eta$, while $t_\eta$ is strictly increasing in $\eta$ on $[0,1)$.
\end{lemma}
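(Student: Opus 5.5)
The plan is to use the closed forms from Proposition~\ref{prop:private-information-precision} and verify each monotonicity claim directly. Throughout, fix $r_A>0$ and $r_P>0$.

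\textbf{Step 1: $r_{A,\eta}$.} The first claim is immediate. Since $r_{A,\eta}=(1-\eta)r_A$ and $\partial r_{A,\eta}/\partial\eta=-r_A<0$, the quantity $r_{A,\eta}$ is strictly decreasing on $[0,1)$.

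\textbf{Step 2: reduce $t_\eta$ to a rational function.} Since $r_P>0$ is a positive multiplicative constant in $t_\eta$, it suffices to show that
\[
g(\eta)=\frac{1+\eta(1-\eta)r_A}{1+(1-\eta)r_A}
\]
is strictly increasing on $[0,1)$.

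\textbf{Step 3: sign of $g'$.} I will differentiate by the quotient rule. Write $N(\eta)=1+\eta(1-\eta)r_A$ and $M(\eta)=1+(1-\eta)r_A>0$. Then $N'=(1-2\eta)r_A$ and $M'=-r_A$, so the numerator of $g'$ is
\[
N'M-NM' = r_A\Bigl[(1-2\eta)\bigl(1+(1-\eta)r_A\bigr)+1+\eta(1-\eta)r_A\Bigr].
\]
Expanding the bracket gives
\[
2(1-\eta)+(1-\eta)r_A\bigl[(1-2\eta)+\eta\bigr]=2(1-\eta)+(1-\eta)^2 r_A .
\]
So the numerator equals $r_A(1-\eta)\bigl[2+(1-\eta)r_A\bigr]$, which is strictly positive for every $\eta\in[0,1)$. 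Hence $g'>0$ on $[0,1)$, and $t_\eta$ is strictly increasing there.

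The main obstacle is the algebraic simplification in Step 3. The term $N'$ changes sign at $\eta=1/2$, so positivity of $g'$ is not obvious term by term and holds only after collecting terms. If the expansion misbehaves, I will instead substitute $u=1-\eta\in(0,1]$ and write
\[
g=\frac{1+u(1-u)r_A}{1+u r_A}.
\]
I would then check directly that $g$ is decreasing in $u$, using $1+u(1-u)r_A=(1+ur_A)-u^2r_A$. This gives
\[
g=1-\frac{u^2 r_A}{1+u r_A},
\]
and $u^2/(1+ur_A)$ is clearly increasing in $u>0$. This alternative route is cleaner, and I would likely present it as the main argument.
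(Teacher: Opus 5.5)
Your proposal is correct, and Step 3 is essentially the paper's own proof: the paper differentiates $t_\eta$ by the quotient rule and gets exactly your numerator, $r_Pr_A(1-\eta)\bigl[r_A(1-\eta)+2\bigr]$, over $\bigl[1+(1-\eta)r_A\bigr]^2$. Your fallback rewriting $t_\eta=r_P\bigl[1-\frac{(1-\eta)^2r_A}{1+(1-\eta)r_A}\bigr]$ is also valid and avoids differentiation; it additionally shows that $t_\eta$ rises from $t_0=r_P/(1+r_A)$ toward $r_P$ as $\eta\uparrow1$.
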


The effect of $\eta$ on counteraction intensity is more subtle. A larger $\eta$ lowers $r_{A,\eta}$, so a larger recommendation adjustment is required to exert a given influence on the final decision. At the same time, it raises $t_\eta$. Recall from Corollary~\ref{cor:counteraction} that, holding advisor precision fixed, counteraction intensity is increasing in relative trust up to $t^*=1/\sqrt{1-p}$ and decreasing thereafter. Hence, when $t_\eta\leq t^*$, both forces increase counteraction; once $t_\eta>t^*$, they operate in opposite directions, and the overall effect of $\eta$ on counteraction intensity can be non-monotone.

\begin{corollary}[Private information and advisor vulnerability]\label{cor:private-information-loss}
Fix $p\in(0,1)$ and $S_P\neq r$. The advisor's minimized loss $L^*$ is strictly increasing in $\eta$ on $[0,1)$.
\end{corollary}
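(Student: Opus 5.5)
The plan is to chain three earlier results: Proposition~\ref{prop:private-information-precision}, Corollary~\ref{cor:loss}, and Lemma~\ref{lem:private-information-trust}. By Proposition~\ref{prop:private-information-precision}, the minimized loss under the private-information structure has the same closed form as in Proposition~\ref{prop:optimal-recommendation}, with $r_A,r_P$ replaced by $r_{A,\eta},r_{P,\eta}$. The relevant relative trust is therefore $t_\eta$, and the loss is
\[
L^*(\eta)=\frac{p(1-p)t_\eta^2}{(1-p)(1+t_\eta)^2+p}\,(r-S_P)^2 .
\]
By the relative-trust sufficiency part of Corollary~\ref{cor:loss}, $L^*$ depends on $\eta$ only through $t_\eta$, so it suffices to write $L^*(\eta)=g(t_\eta)$ for a function $g$ that does not depend on $\eta$.

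The argument then has two steps. First, by part~3 of Corollary~\ref{cor:loss}, $g$ is strictly increasing in $t$ on $[0,\infty)$ whenever $p\in(0,1)$ and $S_P\neq r$. The factor $(r-S_P)^2>0$ is needed here, since otherwise $L^*\equiv 0$. To make the argument self-contained, I would also verify this directly. Writing
\[
g(t)\propto \frac{t^2}{(1-p)(1+t)^2+p},
\]
the derivative has numerator
\[
2t\bigl[(1-p)(1+t)^2+p\bigr]-2t^2(1-p)(1+t)=2t\bigl[(1-p)(1+t)+p\bigr],
\]
which is strictly positive for $t>0$. Second, by Lemma~\ref{lem:private-information-trust}, $t_\eta$ is strictly increasing in $\eta$ on $[0,1)$ when $r_A,r_P>0$. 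Composing a strictly increasing function with a strictly increasing map gives the claim.

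The main point to check is the hypotheses. Lemma~\ref{lem:private-information-trust} requires $r_A>0$ and $r_P>0$, and the monotonicity in $t$ needs $t_\eta>0$ along the path. If $r_P=0$, then $t_\eta\equiv 0$ and the loss is identically zero, so strict monotonicity fails. I would therefore state that the corollary is understood under the standing assumption $r_A,r_P>0$ of Lemma~\ref{lem:private-information-trust}. Under that assumption $t_\eta\ge r_P/(1+r_A)>0$, so the derivative computation applies on the whole range. If $r_A=0$ but $r_P>0$, the effective precisions reduce to $r_{A,\eta}=0$ and $r_{P,\eta}=r_P$, so $t_\eta$ is constant and the loss does not vary with $\eta$. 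I would note this boundary case explicitly.
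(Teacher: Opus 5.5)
Your proposal is correct and follows the paper's proof: it gets strict monotonicity of $L^*$ in $t_\eta$ from Proposition~\ref{prop:private-information-precision} and Corollary~\ref{cor:loss}, and composes it with the strict increase of $t_\eta$ in $\eta$ from Lemma~\ref{lem:private-information-trust}. Your direct derivative check and your explicit note that $r_A,r_P>0$ is needed are both accurate; the paper's proof leaves these points implicit.
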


A larger $\eta$ increases the relative influence of personal AI through $t_\eta$. Because the minimized loss is strictly increasing in relative trust, this widens the gap between the consultation and no-consultation environments and increases advisor vulnerability.

\section{Experimental Setting and Design}
\label{sec:experiment}

We designed the City Manager Game, a controlled online forecasting experiment in which participants act as city managers. In each round, a participant uses information on a task card to form an initial forecast, has an opportunity to receive advice, and submits a final forecast. Advice is presented as coming from enterprise AI, personal AI, or a human expert. Enterprise AI represents the focal advisor in the model, personal AI represents the outside source, and the human expert provides a non-AI comparison.

The experiment consists of three blocks, each addressing a different part of the advisor's problem. Block 1 measures how participants combine their initial forecasts with advice from one or two sources. Because the recommendations appear automatically, forecast revisions describe reliance once advice is available. Block 2 examines whether participants obtain personal-AI advice when access requires entering a code within a time limit. Block 3 compares unadjusted enterprise advice with naïve counteracting policy. Together, the blocks distinguish reliance on displayed advice, consultation of an optional source, and performance under a fixed counteraction policy.

\subsection{Forecasting Task and Advice Sources}
\label{subsec:task-environment}
\label{subsec:controlled-advice}

Each round contains one forecasting task concerning finance and commerce, public health, or parks and recreation. A task card describes the municipal problem, provides information for forming a forecast, and specifies a feasible forecast range. For example, one task asks participants to forecast the number of applications to a municipal grant program using information about earlier programs. The task has a predetermined outcome against which forecasts are evaluated.

Numerical recommendations are specified in advance for each task and presented through controlled advice interfaces. Enterprise AI is described as a municipal decision-support system, personal AI as an external assistant, and the human expert as a professional advisor. These source descriptions accompany assigned recommendations, rather than forecasts obtained from live AI systems or experts. Specifying recommendations in advance allows us to control their content and the requirements for obtaining them.

\subsection{Experimental Flow}
\label{subsec:experimental-flow}

Participants complete four practice rounds followed by 32 main rounds: eight in Block 1, twelve in Block 2, and twelve in Block 3. Practice introduces the task without advice, then with enterprise AI, and finally with both enterprise and personal AI. Table~\ref{tab:experiment-flow} summarizes the main rounds, numbered independently of practice.

\begin{table}[!htb]
\small\centering
\caption{Experimental Flow}
\label{tab:experiment-flow}
\begin{tabular}{p{0.10\textwidth}p{0.14\textwidth}p{0.62\textwidth}}
\hline
Block & Main rounds & Advice environment \\
\hline
1 & R1--R2 & Enterprise AI only \\
 & R3--R4 & Personal AI only \\
 & R5--R6 & Human expert only \\
 & R7--R8 & Enterprise and personal AI \\
2 & R9--R10 & Both AI recommendations displayed automatically \\
 & R11--R12 & Optional personal AI: 4-character code, 10 seconds \\
 & R13--R14 & Optional personal AI: 6-character code, 10 seconds \\
 & R15--R16 & Optional personal AI: 8-character code, 10 seconds \\
 & R17--R18 & Optional personal AI: 10-character code, 10 seconds \\
 & R19--R20 & Optional personal AI: 10-character code, 8 seconds \\
3 & R21--R32 & Enterprise AI throughout; personal-AI availability and counteracting advice vary within groups of four rounds \\
\hline
\end{tabular}
\begin{minipage}{0.92\textwidth}\vspace{0.5em}\footnotesize
Notes. All participants encounter blocks and scenarios in the same order. The private-context cue is randomized within each pair of Block 2 rounds. Block 3 assignments are shuffled within each group of four rounds. Four practice rounds precede R1.
\end{minipage}
\end{table}

Block 1 provides two rounds with each of four advice configurations: enterprise AI only, personal AI only, a human expert only, and both AI sources. The recommendations in each configuration appear automatically after the initial forecast. Comparing initial and final forecasts allows us to estimate reliance on the displayed advice.

Block 2 keeps enterprise advice visible and varies the requirements for obtaining personal-AI advice. Both recommendations appear automatically in the first two rounds. In the remaining ten rounds, participants can choose to enter a code before a time limit expires or proceed without attempting access. Correct entry within the time limit reveals the personal-AI recommendation; an unsuccessful attempt leaves it hidden. We refer to successful access as consultation. This design distinguishes a decision to seek advice from actually obtaining the recommendation.

The ten rounds comprise five consecutive pairs: four-, six-, eight-, and ten-character codes with a ten-second limit, followed by ten-character codes with an eight-second limit. Within each pair, one round includes a private-context cue and the other does not, with randomized order. The cue describes personal AI as drawing on the participant's previous rounds without changing its assigned numerical recommendation.

Block 3 varies personal-AI availability and enterprise recommendation policy over three groups of four rounds. Personal AI is available in one, two, and three rounds, respectively. Counteracting enterprise advice appears in one, two, and two rounds, always when personal AI is available. These assignments are shuffled within each group, and available advice appears automatically. Private-context framing is present throughout the block. An enterprise certification cue is assigned at the participant level and held fixed within the block.

Counteraction places the enterprise recommendation on the opposite side of the task outcome from personal AI, at the same distance. For round $t$, let $y_t$ denote the task outcome and let $S_{E,t}$ and $S_{P,t}$ denote the enterprise and personal-AI recommendations. The enterprise recommendation corresponds to $S_A$ in the theory. The counteracting recommendation is
\begin{equation}
S_{E,t}^{\mathrm{c}}=2y_t-S_{P,t}=y_t+(y_t-S_{P,t}),
\label{eq:implemented-counteraction}
\end{equation}
subject to the feasible range $[0,1000]$. It is constructed from the task outcome and personal-AI recommendation, without participant-specific weight calibration. Participants see the resulting recommendation without being told that a counteraction policy is used. When the range constraint does not bind, an equal-weight average of the two recommendations equals the outcome exactly.

\subsection{Procedure and Measures}
\label{subsec:procedure-incentives}
\label{subsec:survey-measures}

The experiment was administered online as part of a university course assignment. Participants completed the task individually and were instructed to make accurate forecasts. Each round elicited an initial forecast and confidence rating, provided the advice stage, and elicited a final forecast and confidence rating. The game score assigned 25\% of the round score to the initial forecast and 75\% to the final forecast; each component decreased with absolute error, subject to a floor of zero. Practice did not contribute to the total score. Outcome feedback followed each round in Blocks 1 and 2 and each group of four rounds in Block 3.

The platform records forecasts, numerical recommendations, source availability, access attempts and outcomes, confidence, timing, and treatment assignments. A post-game survey elicits stated perceptions of the advice sources and reported reliance. The analyses below use forecast and access behavior.

We measure forecast accuracy using absolute error, the distance between the forecast and the task outcome. For a given participant in round $t$, let $x_{0,t}$ and $x_{1,t}$ denote the initial and final forecasts and let $y_t$ denote the task outcome. Forecast improvement is $|x_{0,t}-y_t|-|x_{1,t}-y_t|$, so a positive value indicates that the final forecast is more accurate. The initial forecast provides a comparison within the same task, before advice is displayed.

For the ten code-required rounds, we also measure how often participants initiate the code-entry process and how often they successfully obtain the recommendation. We call these events an access attempt and consultation, respectively. Reporting both measures allows us to distinguish rounds without an attempt from rounds in which an attempt fails to reveal advice.

To measure reliance, we fit the final forecast as a weighted average of the initial forecast and the displayed recommendations. The weights are nonnegative, sum to one, and minimize squared forecast residuals. We pool observations across participants within each advice configuration; a larger weight on a recommendation indicates greater reliance under the fitted rule. Appendix~\ref{subsec:measurement-estimation} gives the estimators, participant-level resampling procedure, and alternative weight-of-advice summaries.

\subsection{Empirical Comparisons}
\label{subsec:design-scope}

The three blocks support different empirical comparisons. Blocks 1 and 2 describe forecast revisions and consultation across the advice environments participants encounter. Their fixed sequences do not separate task and order differences from source identity or access requirements. In Block 3, the third group contains both counteracting and unadjusted enterprise advice with personal AI available. The shuffled assignments within this group permit a within-participant policy comparison while holding personal-AI availability fixed.

The fitted aggregation weights summarize forecast revisions under a linear rule. Interpreting them as perceived precision requires the model's aggregation assumptions. Similarly, initial-to-final accuracy changes compare two judgments within the same task, without a concurrent no-advice condition. We use these comparisons to describe advice use and the performance of the implemented recommendation policy.

\section{Experimental Evidence}
\label{sec:results}

\subsection{Sample and Forecast Accuracy}
\label{subsec:results-sample}

The analysis uses a common sample of 69 participants, each completing four practice rounds and 32 main rounds, yielding 2,208 main-task observations. Practice is excluded throughout.

Final forecasts are more accurate on average in every block. Mean absolute error falls from 69.4 to 45.6 in Block 1, from 55.8 to 35.3 in Block 2, and from 69.3 to 41.1 in Block 3. These changes describe improvements over the advice stage within each set of tasks.

\subsection{Reliance on Advice}
\label{subsec:results-block1}

We first examine how final forecasts combine initial judgments with displayed advice. In the single-source rounds of Block 1, the estimated weights on the recommendation are 0.65 for enterprise AI, 0.55 for personal AI, and 0.55 for the human expert (Table~\ref{tab:single-source-weights}). Each estimate describes how the final forecast combines the initial judgment with one source of advice. The table also reports the uncertainty around these pooled estimates; the comparisons concern the source configurations on the tasks used in the experiment.

\begin{table}[!htb]
\small\centering
\caption{Reliance in Single-Source Rounds}
\label{tab:single-source-weights}
\begin{tabular}{lcc}
\hline
Advice source & Pooled weight & 95\% confidence interval \\
\hline
Enterprise AI & 0.65 & [0.42, 0.79] \\
Personal AI & 0.55 & [0.36, 0.70] \\
Human expert & 0.55 & [0.37, 0.71] \\
\hline
\end{tabular}
\begin{minipage}{0.92\textwidth}\vspace{0.5em}\footnotesize
Notes. Each source condition contains 138 rounds from 69 participants. Weights minimize squared forecast-revision residuals subject to lying in $[0,1]$. Intervals are percentile intervals from 20,000 participant bootstrap samples. Appendix~\ref{subsec:single-source-robustness} reports weight-of-advice definitions and sensitivity.
\end{minipage}
\end{table}

With both AI recommendations displayed, the final forecast combines three inputs. The fitted weights are 0.35 on the initial forecast, 0.37 on enterprise AI, and 0.28 on personal AI (Figure~\ref{fig:block1-weights}). Both recommendations contribute to the fitted aggregate, while roughly one-third of the weight remains on the participant's initial judgment. Thus, the dual-source results do not reduce to a comparison of which AI receives more weight; the initial forecast remains a substantial component of the fitted decision rule.

To relate this aggregation rule to the theory, let $\widehat w_0$, $\widehat w_E$, and $\widehat w_P$ denote the estimated weights on the initial forecast, enterprise AI, and personal AI, respectively. The relative-influence index is $\widehat t=\widehat w_P/(\widehat w_0+\widehat w_E)=0.39$, with a 95\% bootstrap interval of $[0.22,0.60]$. This index compares personal AI's fitted weight with the combined weight on enterprise advice and the initial judgment. It concerns the influence of personal AI when both recommendations are displayed, rather than how often participants would choose to consult it.

\begin{figure}[!htb]
\centering
\includegraphics[width=0.70\textwidth]{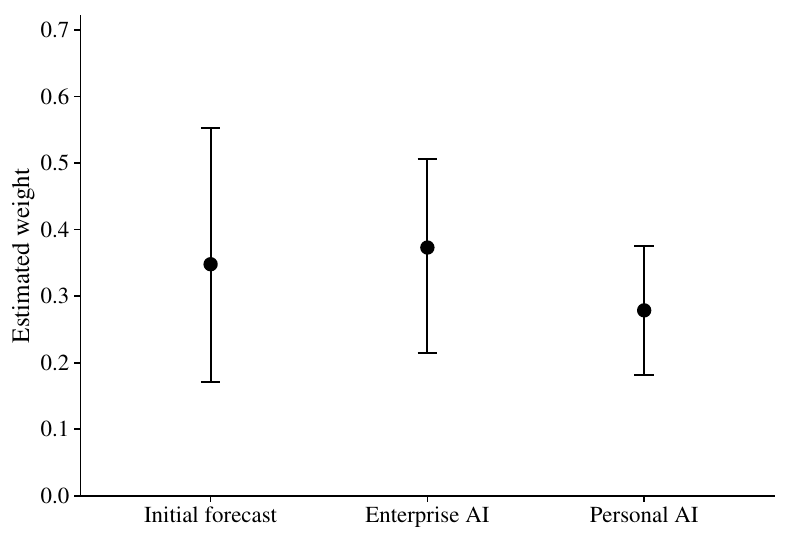}
\caption{Aggregation weights in dual-source rounds. Points are pooled constrained least-squares estimates from 138 rounds completed by 69 participants. Vertical intervals are 95\% percentile intervals from 20,000 participant bootstrap samples. The three weights are nonnegative and sum to one.}
\label{fig:block1-weights}
\end{figure}

For the single-source rounds, an alternative measure, weight of advice, records the fraction of the distance from the initial forecast to the recommendation covered by the forecast revision. Averaging this measure within participants and then across participants gives 0.38 for enterprise AI, 0.41 for personal AI, and 0.44 for the human expert, after excluding advice gaps below five units and truncating ratios to $[0,1]$. Unlike this participant-averaged measure, pooled least squares gives greater influence to rounds with larger initial-to-advice gaps. The precise definitions and sensitivity to the gap threshold and truncation rule appear in Appendix~\ref{subsec:single-source-robustness}.

\subsection{Access to Personal AI}
\label{subsec:results-friction}

Block 2 examines whether participants obtain the personal-AI recommendation when access requires an additional action. In its ten code-required rounds, obtaining advice requires both an access attempt and successful code entry. Figure~\ref{fig:access-outcomes}(a) reports the proportion of rounds with each outcome.

Consultation occurs in 50.7\% of rounds with four-character codes, 43.5\% with six characters, 39.9\% with eight characters, and 24.6\% when the two ten-character conditions are pooled. The corresponding attempt rate declines more gradually, from 53.6\% in the four-character condition to 42.4\% in the pooled ten-character conditions. The figure separates the ten- and eight-second limits used with ten-character codes. The difference between the two series reflects attempts that do not reveal advice.

\begin{figure}[!htb]
\centering
\begin{minipage}[t]{0.49\textwidth}
\centering
\includegraphics[width=\linewidth]{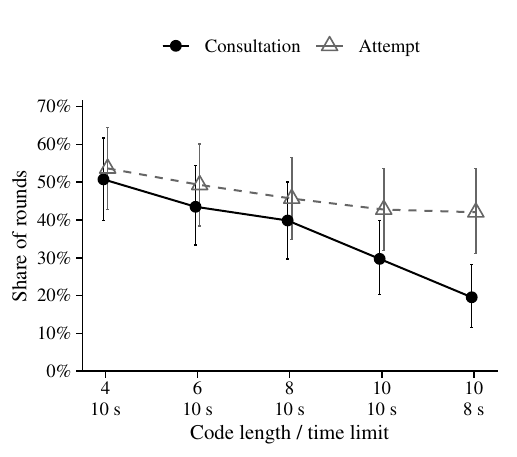}
\small (a) Attempts and consultation
\end{minipage}\hfill
\begin{minipage}[t]{0.49\textwidth}
\centering
\includegraphics[width=\linewidth]{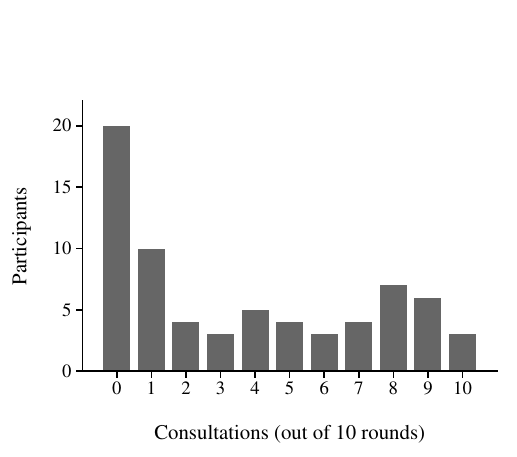}
\small (b) Consultation across participants
\end{minipage}
\caption{Access to personal AI. Panel (a) reports attempt and consultation rates in the implemented code-length and time-limit conditions. Each condition contains two rounds from each of 69 participants. Intervals are 95\% percentile intervals from 20,000 participant bootstrap samples. Lines connect the ordered conditions for readability; code length, scenario, and position vary together. Panel (b) counts participants by consultations across all ten code-required rounds. Consultation means that advice was revealed. Zero-code rounds display advice automatically and are excluded from both panels.}
\label{fig:access-outcomes}
\end{figure}

In the ten-character conditions, many attempts do not result in consultation. Each condition contains 138 rounds from 69 participants. With a ten-second limit, 59 attempts yield 41 consultations; with an eight-second limit, 58 attempts yield 27 consultations. Attempt rates are therefore similar, at 42.8\% and 42.0\%, while consultation rates are 29.7\% and 19.6\%, respectively. Under the eight-second limit, 31 of 58 attempts fail to reveal advice. Thus, nonconsultation includes unsuccessful efforts to obtain a recommendation as well as rounds without an attempt.

The aggregate consultation rates also conceal substantial differences across participants. Of the 69 participants, 20 never consult personal AI in the ten code-required rounds, while three of the 69 consult it in all ten (Figure~\ref{fig:access-outcomes}(b)). These counts motivate a comparison with a benchmark in which all participants consult with a common probability, independently across rounds.

Consider the two rounds with four-character codes. The aggregate consultation rate would imply approximately 17 participants with no consultations, 34 with one, and 18 with two under this benchmark. The observed counts are 28, 12, and 29, respectively. Consultation is more concentrated at zero and two than the benchmark predicts. The observed counts depart from the common-probability, independent-round benchmark at every code length (simulation-based $p<0.001$ throughout). The tests concern these two assumptions jointly; Appendix~\ref{subsec:consultation-benchmark} reports the counts and tests by code length.

The distinction from Block 1 is that personal AI can be offered without its recommendation being consulted. The aggregation weights describe how displayed advice enters a forecast, whereas the consultation rate describes how often participants obtain that advice. These are separate inputs to the focal advisor's recommendation problem.

\subsection{Counteracting Recommendations}
\label{subsec:results-block3}

The final comparison asks how participants perform when enterprise advice offsets personal AI's error. We use the third group of Block 3 to compare policies while holding personal-AI availability fixed. This is the only group containing both counteracting and unadjusted enterprise advice with personal AI available. Each participant contributes two counteracting rounds and one unadjusted round, yielding 138 and 69 observations, respectively. Across the full block, all 345 counteracting recommendations satisfy $S_{E,t}^{\mathrm{c}}=2y_t-S_{P,t}$ exactly, so the range constraint never alters the intended policy.

\begin{figure}[!htb]
\centering
\includegraphics[width=0.78\textwidth]{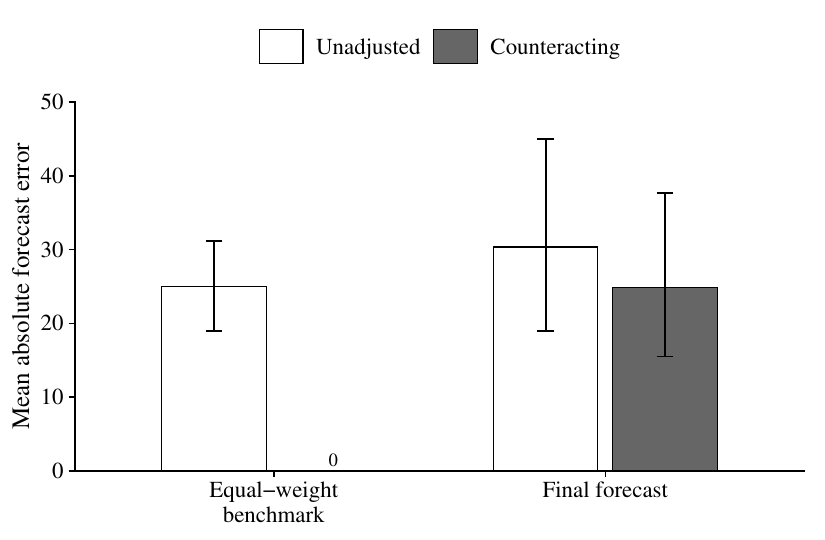}
\caption{Counteraction and forecast accuracy. Bars report mean absolute error in the third group of Block 3, restricted to rounds with personal AI available: two counteracting rounds and one unadjusted round per participant. Means average within participant and then across 69 participants; intervals are 95\% participant bootstrap intervals. The equal-weight benchmark averages the two displayed recommendations and has zero error under counteraction by construction. The final forecast is the participant's observed decision. The paired difference in final error is $-5.5$, with a 95\% paired $t$ interval of $[-15.4,4.5]$.}
\label{fig:counteraction-performance}
\end{figure}

Mean final absolute error is 24.9 under counteraction and 30.3 under unadjusted advice. Averaging within participant before comparing conditions gives a difference of $-5.5$, with a 95\% paired interval of $[-15.4,4.5]$. The estimate favors counteraction but is imprecise. A specification with participant and scenario fixed effects gives a difference of $-7.3$ (standard error 5.8); Appendix~\ref{subsec:counteraction-analysis} reports the specifications, condition means (Figure~\ref{fig:block3-counteraction}), and participant-level distribution.

The equal-weight benchmark in Figure~\ref{fig:counteraction-performance} distinguishes the numerical construction of counteraction from the participant's response to it. Averaging the two displayed recommendations produces zero error under counteraction, compared with 25.0 under unadjusted advice. That construction alone does not imply an accurate final forecast when the participant also uses an initial judgment or assigns different weights to the two recommendations.

Let $D_t=w_0x_{0,t}+w_ES_{E,t}+w_PS_{P,t}$ denote the forecast implied by a linear aggregation rule, with weights on the initial judgment, enterprise AI, and personal AI summing to one. Substituting the counteracting recommendation gives
\[
D_t-y_t=w_0(x_{0,t}-y_t)+(w_P-w_E)(S_{P,t}-y_t).
\]
The first term is the contribution of the initial forecast's error. It can remain even when the two recommendations receive equal weight. The second term captures the contribution of unequal advice weights, which can prevent the recommendations' errors from canceling. These terms can reinforce or offset one another, depending on the initial forecast and the relative weights.

This decomposition connects the implemented policy to Proposition~\ref{prop:optimal-recommendation}: recommendation design must account for the decision maker's prior as well as the competing advice. The Block 1 estimates illustrate an aggregation rule in which both contributions can arise. The Block 3 comparison measures the performance of the fixed policy actually implemented. Counteraction must therefore be evaluated at the final-decision stage, where the initial judgment and advice weights determine whether the numerical adjustment achieves its purpose.

\SingleSpacedXI

\ifAIRecArxivStyle
\bibliographystyle{informs2014}
\bibliography{references}
\else
\AIRecPrintMainBibliography
\fi

\AIRecEndMainBibunit

\newpage


\ECSwitch

\vspace{-0.7cm}

\ECHead{Appendix}

\begin{APPENDICES}

\ifAIRecArxivStyle
\else
\AIRecBeginAppendixBibunit

\SingleSpacedXI

\renewcommand{\refname}{References for the E-Companion}
\AIRecPrintAppendixBibliography
\AIRecEndAppendixBibunit
\fi

\normalsize
\section{Proofs}
\label{app-proofs}

This appendix provides proofs of all theoretical results in the main text.

\subsection{Proof of Proposition~\ref{prop:optimal-recommendation}}

\begin{proof}
Fix the realized $S_P$. The advisor chooses $S_A$ to minimize 
\begin{align*}
L(S_A) = E[(D - r)^2 | S_P] = 
(1-p)\left(r - \frac{\mu_0 + r_A S_A}{1 + r_A}\right)^2 + p\left(r - \frac{\mu_0 + r_A S_A + r_P S_P}{1 + r_A + r_P}\right)^2.
\end{align*}
We rewrite the expected loss as a linear combination of $r$, $r - \mu_0$, $r - S_A$, and $r - S_P$:
\begin{align*}
L = &\ (1 - p)\left[
\frac{1}{1 + r_A}(r - \mu_0)
+ \frac{1}{1 + r_A} r_A (r - S_A)
\right]^2\\
+ &\ p\left[
\frac{1}{1 + r_A + r_P}(r - \mu_0)
+ \frac{1}{1 + r_A + r_P} r_A (r - S_A)
+ \frac{1}{1 + r_A + r_P} r_P (r - S_P)
\right]^2.
\end{align*}
This is a quadratic function of $x = r - S_A$ with positive leading coefficient, hence strictly convex, so any solution to the first-order condition is the unique global minimizer. 

Differentiating $L$ with respect to $x$ gives
\[
\begin{aligned}
\frac{\partial L}{\partial x}
={}&\frac{2(1-p)r_A}{1+r_A}
\left[
\frac{r-\mu_0}{1+r_A}
+\frac{r_Ax}{1+r_A}
\right]\\
&+\frac{2pr_A}{1+r_A+r_P}
\left[
\frac{r-\mu_0}{1+r_A+r_P}
+\frac{r_Ax}{1+r_A+r_P}
+\frac{r_P(r-S_P)}{1+r_A+r_P}
\right].
\end{aligned}
\]
Setting the derivative to zero and dividing by $2r_A$ gives
\[
\left[
\frac{1-p}{(1+r_A)^2}
+\frac{p}{(1+r_A+r_P)^2}
\right]\bigl[(r-\mu_0)+r_Ax\bigr]
+
\frac{pr_P}{(1+r_A+r_P)^2}(r-S_P)
=0.
\]
Solving for $x$ yields
\begin{align*}
x^*
= - \frac{1}{r_A}(r-\mu_0)
- \frac{r_P p (1+r_A)^2}
{r_A\big[(1-p)(1+r_A+r_P)^2+p(1+r_A)^2\big]}(r-S_P).
\end{align*}
Substituting back gives
\[
S_A^*
=
r+\frac{1}{r_A}(r-\mu_0)
+\Delta(p,r_A,r_P)(r-S_P),
\]
where
\[
\Delta(p,r_A,r_P)=
\frac{p r_P(1+r_A)^2}
{r_A\bigl[(1-p)(1+r_A+r_P)^2+p(1+r_A)^2\bigr]}.
\]
Using $r_P=t(1+r_A)$, we obtain
\[
\begin{aligned}
\Delta(p,r_A,r_P)
&=
\frac{pt(1+r_A)^3}
{r_A(1+r_A)^2\bigl[(1-p)(1+t)^2+p\bigr]}\\
&=
\left(1+\frac{1}{r_A}\right)
\frac{pt}{(1-p)(1+t)^2+p}.
\end{aligned}
\]
Substituting $S_A^*$ into the loss function gives
\[
\begin{aligned}
L^*
&=
\left[
(1-p)\left(\frac{r_A\Delta}{1+r_A}\right)^2
+p\left(\frac{r_A\Delta-r_P}{1+r_A+r_P}\right)^2
\right](r-S_P)^2\\
&=
\frac{
(1-p)p^2r_P^2(1+r_A)^2
+p(1-p)^2r_P^2(1+r_A+r_P)^2
}{
\bigl[(1-p)(1+r_A+r_P)^2+p(1+r_A)^2\bigr]^2
}
(r-S_P)^2\\
&=
\frac{p(1-p)r_P^2}
{(1-p)(1+r_A+r_P)^2+p(1+r_A)^2}
(r-S_P)^2.
\end{aligned}
\]
Using $r_P=t(1+r_A)$, we obtain
\[
\begin{aligned}
L^*
&=
\frac{p(1-p)t^2(1+r_A)^2}
{(1+r_A)^2\bigl[(1-p)(1+t)^2+p\bigr]}
(r-S_P)^2\\
&=
\frac{p(1-p)t^2}
{(1-p)(1+t)^2+p}
(r-S_P)^2.
\end{aligned}
\]
\end{proof}

\subsection{Proof of Corollary~\ref{cor:counteraction}}

\begin{proof}
\begin{enumerate}
\item \textbf{Monotonicity in adoption.} Recall that 
\begin{equation*}
{
\Delta(p,r_A,r_P)
=\frac{p\,r_P(1+r_A)^2}
{r_A\big[(1-p)(1+r_A+r_P)^2+p(1+r_A)^2\big]}.
}
\end{equation*}
We can rewrite it as 
\begin{equation}
{
\Delta(p,r_A,r_P)
=
\frac{r_P}{r_A}
\frac{p(1+r_A)^2}
{(1+r_A+r_P)^2-p\big[(1+r_A+r_P)^2 - (1+r_A)^2\big]}.
}
\label{eq:proof_1_1}
\end{equation}

Since $r_P>0$, we have $(1+r_A+r_P)^2 - (1+r_A)^2> 0$. Hence the numerator is strictly increasing
in $p$, while the denominator is strictly decreasing and positive,
implying that $\Delta$ is strictly increasing in $p$.
The limiting values follow directly.
\item \textbf{Non-monotonicity in relative trust.} Recall that
\begin{equation}
\Delta
=
\left(1+\frac{1}{r_A}\right)
\frac{pt}{(1-p)(1+t)^2+p}.
\label{eq:proof_1_2}
\end{equation}

Fix $p\in(0,1)$ and $r_A>0$.
For any $t_2>t_1>0$,
\[
\begin{aligned}
&\frac{pt_2}{(1-p)(1+t_2)^2+p}
-\frac{pt_1}{(1-p)(1+t_1)^2+p}\\
&\quad=
\frac{
pt_2\bigl[(1-p)(1+t_1)^2+p\bigr]
-pt_1\bigl[(1-p)(1+t_2)^2+p\bigr]
}{
\bigl[(1-p)(1+t_1)^2+p\bigr]
\bigl[(1-p)(1+t_2)^2+p\bigr]
}\\
&\quad=
\frac{
p(1-p)\bigl[t_2(1+t_1)^2-t_1(1+t_2)^2\bigr]
+p^2(t_2-t_1)
}{
\bigl[(1-p)(1+t_1)^2+p\bigr]
\bigl[(1-p)(1+t_2)^2+p\bigr]
}\\
&\quad=
\frac{
p(t_2-t_1)\bigl[(1-p)(1-t_1t_2)+p\bigr]
}{
\bigl[(1-p)(1+t_1)^2+p\bigr]
\bigl[(1-p)(1+t_2)^2+p\bigr]
}\\
&\quad=
\frac{
p(t_2-t_1)\bigl[1-(1-p)t_1t_2\bigr]
}{
\bigl[(1-p)(1+t_1)^2+p\bigr]
\bigl[(1-p)(1+t_2)^2+p\bigr]
}.
\end{aligned}
\]

The difference is positive when
$0<t_1<t_2\leq 1/\sqrt{1-p}$,
and negative when
$1/\sqrt{1-p}\leq t_1<t_2$.
Hence, the expression is strictly increasing below
$1/\sqrt{1-p}$ and strictly decreasing above it,
yielding the unique maximizer
\[
t^*=\frac{1}{\sqrt{1-p}}.
\]
Moreover, by~\eqref{eq:proof_1_2}, 
\[
\lim_{t\downarrow0}\Delta=0,
\qquad
\lim_{t\uparrow\infty}\Delta=0.
\]

\item \textbf{Monotonicity in advisor precision, holding relative trust fixed.}
Finally, by~\eqref{eq:proof_1_2}, fixing $(p,t)$, $\Delta$ is strictly decreasing in $r_A>0$.
\end{enumerate}
\end{proof}

\subsection{Proof of Corollary~\ref{cor:loss}}

\begin{proof}
\begin{enumerate}
\item \textbf{{Non-monotonicity in adoption.}} 
Let $A=(1 + r_A)^2$, and $B=(1 + r_A+r_P)^2$.
Then
\[
L^*
=
\frac{p(1-p)r_P^2}{(1-p)B+pA}(r-S_P)^2.
\]

Maximizing $L^*$ with respect to $p$ is equivalent to minimizing
\[
D(p)
=
\frac{(1-p)B+pA}{p(1-p)}
=
\frac{B}{p}+\frac{A}{1-p}.
\]
Differentiating $D(p)$ gives
\[
D'(p)=-\frac{B}{p^2}+\frac{A}{(1-p)^2},
\qquad
D''(p)=\frac{2B}{p^3}+\frac{2A}{(1-p)^3}>0.
\]
Setting $D'(p)=0$ gives
\[
\frac{\sqrt A}{1-p}=\frac{\sqrt B}{p},
\]
and hence
\[
p^*
=\frac{\sqrt B}{\sqrt A+\sqrt B}
=\frac{1+r_A+r_P}{2(1+r_A)+r_P}
=1-\frac{1}{t+2}.
\]
Since $D'$ is strictly increasing and $D'(p^*)=0$,
$D$ is strictly decreasing on $(0,p^*)$ and strictly
increasing on $(p^*,1)$.

Therefore, for $S_P\neq r$, $L^*$ is strictly increasing
on $(0,p^*)$ and strictly decreasing on $(p^*,1)$,
with unique maximizer $p^*$.

For fixed $p\in(0,1)$, the expression for $L^*$ gives
$L^*\to0$ as $t\downarrow0$ and
$L^*\to p(r-S_P)^2$ as $t\uparrow\infty$.

\item \textbf{Relative-trust sufficiency.}
The expression for $L^*$ shows directly that it depends
on $r_A$ and $r_P$ only through $t$.

\item \textbf{Monotonicity in relative trust.} 
Dividing numerator and denominator by $t^2$, we have
\[
L^*
=
\frac{p(1-p)}
{(1-p)\left(\frac{1+t}{t}\right)^2+\frac{p}{t^2}}
(r-S_P)^2.
\]
The denominator is strictly decreasing in $t>0$,
so $L^*$ is strictly increasing in $t$ whenever $S_P\neq r$.

\item \textbf{Implications for source precision.} 
Since $t=r_P/(1+r_A)$ increases in $r_P$
and decreases in $r_A$,
the final monotonicities follow.
\end{enumerate}
\end{proof}

\subsection{Proof of Proposition~\ref{prop:partial-predictability}}

\begin{proof}
Conditional on $H$, the advisor chooses $S_A$ to minimize
\[
\begin{aligned}
L(S_A\mid H)
&=E\!\left[(D-r)^2\mid H\right]\\
&=(1-p)\left(r-\frac{\mu_0+r_AS_A}{1+r_A}\right)^2\\
&\quad+p\,E\!\left[
\left(r-\frac{\mu_0+r_AS_A+r_PS_P}{1+r_A+r_P}\right)^2
\middle|H
\right].
\end{aligned}
\]
Here, consultation occurs with probability $p$ independently of $S_P$ conditional on $H$.

For the consultation term, write
\[
\begin{aligned}
r-\frac{\mu_0+r_AS_A+r_PS_P}{1+r_A+r_P}
={}&r-\frac{\mu_0+r_AS_A+r_PE[S_P\mid H]}
{1+r_A+r_P}\\
&-\frac{r_P}{1+r_A+r_P}
\bigl(S_P-E[S_P\mid H]\bigr).
\end{aligned}
\]
Since
\[
E\!\left[S_P-E[S_P\mid H]\mid H\right]=0,
\]
expanding the square and taking the conditional expectation gives
\[
\begin{aligned}
&E\!\left[
\left(r-\frac{\mu_0+r_AS_A+r_PS_P}{1+r_A+r_P}\right)^2
\middle|H
\right]\\
&\quad=
\left(
r-\frac{\mu_0+r_AS_A+r_PE[S_P\mid H]}
{1+r_A+r_P}
\right)^2\\
&\qquad+
\left(\frac{r_P}{1+r_A+r_P}\right)^2
\operatorname{Var}(S_P\mid H).
\end{aligned}
\]
Therefore,
\[
\begin{aligned}
L(S_A\mid H)
={}&(1-p)\left(r-\frac{\mu_0+r_AS_A}{1+r_A}\right)^2\\
&+p\left(
r-\frac{\mu_0+r_AS_A+r_PE[S_P\mid H]}
{1+r_A+r_P}
\right)^2\\
&+p\left(\frac{r_P}{1+r_A+r_P}\right)^2
\operatorname{Var}(S_P\mid H).
\end{aligned}
\]

The last term does not depend on $S_A$. Hence, the optimization problem is identical to that in Proposition~\ref{prop:optimal-recommendation}, with $S_P$ replaced by $E[S_P\mid H]$. The optimal recommendation is therefore
\[
S_A^*
=
r+\frac{1}{r_A}(r-\mu_0)
+\Delta(p,r_A,r_P)\bigl(r-E[S_P\mid H]\bigr),
\]
where $\Delta(p,r_A,r_P)$ is the counteraction intensity in Proposition~\ref{prop:optimal-recommendation}.

The minimized loss likewise equals the corresponding loss in Proposition~\ref{prop:optimal-recommendation} plus the conditional variance term:
\[
\begin{aligned}
L^*(H)
={}&
\frac{p(1-p)r_P^2}
{(1-p)(1+r_A+r_P)^2+p(1+r_A)^2}
\bigl(r-E[S_P\mid H]\bigr)^2\\
&+
p\left(\frac{r_P}{1+r_A+r_P}\right)^2
\operatorname{Var}(S_P\mid H).
\end{aligned}
\]
\end{proof}

\subsection{Proof of Proposition~\ref{prop:costly-adjustment}}

\begin{proof}
Write the decision loss as
\[
\begin{aligned}
L(S_A)
&=E\!\left[(D-r)^2\right]\\
&=(1-p)\left(\frac{\mu_0+r_AS_A}{1+r_A}-r\right)^2
+p\left(\frac{\mu_0+r_AS_A+r_PS_P}{1+r_A+r_P}-r\right)^2.
\end{aligned}
\]
This is a quadratic function of $S_A$, with
\[
L''(S_A)
=
2(1-p)\left(\frac{r_A}{1+r_A}\right)^2
+2p\left(\frac{r_A}{1+r_A+r_P}\right)^2
=2W.
\]
By Proposition~\ref{prop:optimal-recommendation}, $L'(S_A^*)=0$ and $L(S_A^*)=L^*$. Since $L$ is quadratic, its expansion around $S_A^*$ is exact:
\[
\begin{aligned}
L(S_A)
&=L(S_A^*)+L'(S_A^*)(S_A-S_A^*)
+\frac12L''(S_A^*)(S_A-S_A^*)^2\\
&=L^*+W(S_A-S_A^*)^2.
\end{aligned}
\]
Therefore, the advisor's objective under costly adjustment is
\[
L_\kappa(S_A)
=L^*+W(S_A-S_A^*)^2+\kappa(S_A-S_0)^2.
\]
Differentiating with respect to $S_A$ gives
\[
\frac{\partial L_\kappa}{\partial S_A}
=2W(S_A-S_A^*)+2\kappa(S_A-S_0),
\qquad
\frac{\partial^2 L_\kappa}{\partial S_A^2}
=2(W+\kappa)>0.
\]
Hence, the objective is strictly convex. Setting the first derivative to zero gives
\[
(W+\kappa)S_A=WS_A^*+\kappa S_0.
\]
The unique optimal recommendation is therefore
\[
\begin{aligned}
S_{A,\kappa}^*
&=\frac{WS_A^*+\kappa S_0}{W+\kappa}\\
&=S_A^*+\frac{\kappa}{W+\kappa}(S_0-S_A^*)\\
&=S_A^*+\lambda_\kappa(S_0-S_A^*),
\end{aligned}
\]
where $\lambda_\kappa=\kappa/(W+\kappa)$.

Substituting $S_{A,\kappa}^*$ into the objective gives
\[
\begin{aligned}
L_\kappa^*
&=L^*+
W\left(\frac{\kappa}{W+\kappa}\right)^2(S_A^*-S_0)^2
+\kappa\left(\frac{W}{W+\kappa}\right)^2(S_A^*-S_0)^2\\
&=L^*+\frac{W\kappa^2+\kappa W^2}{(W+\kappa)^2}(S_A^*-S_0)^2\\
&=L^*+\frac{\kappa W}{W+\kappa}(S_A^*-S_0)^2\\
&=L^*+\alpha_\kappa(S_A^*-S_0)^2,
\end{aligned}
\]
where $\alpha_\kappa=\kappa W/(W+\kappa)$.
\end{proof}

\subsection{Proof of Proposition~\ref{prop:private-information-precision}}

\begin{proof} 
Under the decision maker's beliefs, $X,Y,\epsilon_A,\epsilon_P$ are mutually independent. The posterior distribution of $X$ given the advisor's recommendation is
\[
X\mid S_A
\sim
\mathcal N\!\left(
\frac{\sigma_A^2\mu_0+\sigma_X^2S_A}
{\sigma_X^2+\sigma_A^2},
\frac{\sigma_X^2\sigma_A^2}
{\sigma_X^2+\sigma_A^2}
\right).
\]
Since $\theta=X+Y$ and $Y$ is independent of $X,S_A$, the posterior distribution of $\theta$ is
\[
\theta\mid S_A
\sim
\mathcal N\!\left(
\frac{\sigma_A^2\mu_0+\sigma_X^2S_A}
{\sigma_X^2+\sigma_A^2},
\frac{\sigma_X^2\sigma_A^2}
{\sigma_X^2+\sigma_A^2}
+\sigma_Y^2
\right).
\]

Since $\epsilon_P$ is independent of $\theta$ and $S_A$, conditional on $S_A$, the personal-AI recommendation provides an independent Gaussian measurement of $\theta$ with noise variance $\sigma_P^2$. Applying normal updating gives
\[
\begin{aligned}
E[\theta\mid S_A,S_P]
&=
\frac{
\sigma_P^2\,
\frac{\sigma_A^2\mu_0+\sigma_X^2S_A}
{\sigma_X^2+\sigma_A^2}
+
\left(
\frac{\sigma_X^2\sigma_A^2}{\sigma_X^2+\sigma_A^2}
+\sigma_Y^2
\right)S_P
}{
\sigma_P^2+
\frac{\sigma_X^2\sigma_A^2}{\sigma_X^2+\sigma_A^2}
+\sigma_Y^2
}\\[6pt]
&=
\frac{
\sigma_P^2(\sigma_A^2\mu_0+\sigma_X^2S_A)
+
\bigl[\sigma_X^2\sigma_A^2
+\sigma_Y^2(\sigma_X^2+\sigma_A^2)\bigr]S_P
}{
\sigma_P^2(\sigma_X^2+\sigma_A^2)
+\sigma_X^2\sigma_A^2
+\sigma_Y^2(\sigma_X^2+\sigma_A^2)
}\\[6pt]
&=
\frac{
\mu_0+\frac{\sigma_X^2}{\sigma_A^2}S_A
+
\left(
\frac{\sigma_X^2+\sigma_Y^2}{\sigma_P^2}
+\frac{\sigma_X^2\sigma_Y^2}{\sigma_A^2\sigma_P^2}
\right)S_P
}{
1+\frac{\sigma_X^2}{\sigma_A^2}
+\frac{\sigma_X^2+\sigma_Y^2}{\sigma_P^2}
+\frac{\sigma_X^2\sigma_Y^2}{\sigma_A^2\sigma_P^2}
}\\[6pt]
&=
\frac{\mu_0+r_{A,\eta}S_A+r_{P,\eta}S_P}
{1+r_{A,\eta}+r_{P,\eta}},
\end{aligned}
\]
where
\[
r_{A,\eta}=\frac{\sigma_X^2}{\sigma_A^2},
\qquad
r_{P,\eta}
=\frac{\sigma_X^2+\sigma_Y^2}{\sigma_P^2}
+\frac{\sigma_X^2\sigma_Y^2}{\sigma_A^2\sigma_P^2}.
\]
Recall that
\[
\eta=\frac{\sigma_Y^2}{\sigma_X^2+\sigma_Y^2}
=\frac{\sigma_Y^2}{\sigma_0^2}.
\]
Therefore,
\[
\begin{aligned}
r_{A,\eta}
&=\frac{\sigma_X^2}{\sigma_A^2}
=\frac{(1-\eta)\sigma_0^2}{\sigma_A^2}
=(1-\eta)r_A,\\
r_{P,\eta}
&=\frac{\sigma_0^2}{\sigma_P^2}
+\frac{\eta(1-\eta)\sigma_0^4}{\sigma_A^2\sigma_P^2}\\
&=\frac{\sigma_0^2}{\sigma_P^2}
\left[1+\eta(1-\eta)\frac{\sigma_0^2}{\sigma_A^2}\right]\\
&=\bigl[1+\eta(1-\eta)r_A\bigr]r_P.
\end{aligned}
\]
The corresponding relative trust ratio is
\[
t_\eta
=\frac{r_{P,\eta}}{1+r_{A,\eta}}
=
\frac{\bigl[1+\eta(1-\eta)r_A\bigr]r_P}
{1+(1-\eta)r_A}.
\]

When personal AI is not consulted, the posterior mean likewise satisfies
\[
E[\theta\mid S_A]
=
\frac{\sigma_A^2\mu_0+\sigma_X^2S_A}
{\sigma_X^2+\sigma_A^2}
=
\frac{\mu_0+r_{A,\eta}S_A}{1+r_{A,\eta}}.
\]
Thus, both decision rules have the same form as in the baseline model, with $r_A,r_P$ replaced by $r_{A,\eta},r_{P,\eta}$. For $\eta\in[0,1)$, $r_{A,\eta}>0$, so Proposition~\ref{prop:optimal-recommendation} and Corollaries~\ref{cor:counteraction} and~\ref{cor:loss} apply with these substitutions.
\end{proof}

\subsection{Proof of Lemma~\ref{lem:private-information-trust}}

\begin{proof} 
By Proposition~\ref{prop:private-information-precision},
\[
r_{A,\eta}=(1-\eta)r_A,
\qquad
t_\eta
=
\frac{r_P\bigl[1+\eta(1-\eta)r_A\bigr]}
{1+(1-\eta)r_A}.
\]
Since $r_A>0$, $r_{A,\eta}$ is strictly decreasing in $\eta$.

Differentiating $t_\eta$ with respect to $\eta$ gives
\[
\begin{aligned}
\frac{dt_\eta}{d\eta}
&=
\frac{
r_Pr_A(1-2\eta)\bigl[1+(1-\eta)r_A\bigr]
+r_Pr_A\bigl[1+\eta(1-\eta)r_A\bigr]
}{
\bigl[1+(1-\eta)r_A\bigr]^2
}\\
&=
\frac{
r_Pr_A(1-\eta)\bigl[r_A(1-\eta)+2\bigr]
}{
\bigl[1+(1-\eta)r_A\bigr]^2
}.
\end{aligned}
\]
For $r_A,r_P>0$ and $\eta\in[0,1)$, the numerator and denominator are strictly positive, so $dt_\eta/d\eta>0$. Hence, $t_\eta$ is strictly increasing on $[0,1)$.
\end{proof}

\subsection{Proof of Corollary~\ref{cor:private-information-loss}}

\begin{proof} 
Fix $p\in(0,1)$ and $S_P\neq r$. By Proposition~\ref{prop:private-information-precision} and Corollary~\ref{cor:loss}, the advisor's minimized loss is strictly increasing in $t_\eta$. By Lemma~\ref{lem:private-information-trust}, $t_\eta$ is strictly increasing in $\eta$. Therefore, the minimized loss is strictly increasing in $\eta$ on $[0,1)$.
\end{proof}

\section{Trust Investment with a Fixed Recommendation}
\label{app:trust-investment}
The baseline analysis characterizes how the advisor adjusts its recommendation while taking the decision maker's trust parameters as given. In some settings, however, regulatory, ethical, or reputational considerations may limit the extent to which the advisor can freely adjust recommendation content. Even when content is fixed, the advisor may influence how much weight the decision maker places on its recommendation through investments in perceived credibility.

We isolate this strategic margin by fixing the recommendation at $S_A=r$ and allowing the advisor to invest in trust. The fixed-recommendation constraint can be viewed as the limiting case of Section~\ref{subsec:costly-rec-adjust} with $S_0=r$ and $\kappa\to\infty$.

The advisor can either retain its baseline perceived precision $r_A$ or pay a fixed cost $\widetilde c>0$ to raise it to $\widetilde r_A>r_A$. This binary specification captures credibility interventions implemented as discrete choices, such as obtaining certification, introducing an explanation module, or securing a third-party endorsement. Decision makers may also interpret such cues categorically---for example, as certified or uncertified---so a discrete change in perceived trust provides a useful representation of these interventions.

For an advisor precision $q\in\{r_A,\widetilde r_A\}$, the expected decision loss is
\[
\begin{aligned}
L^{\mathrm{fix}}(q)
&=
(1-p)\left(\frac{\mu_0+qr}{1+q}-r\right)^2
+
p\left(\frac{\mu_0+qr+r_PS_P}{1+q+r_P}-r\right)^2\\
&=
(1-p)\frac{(r-\mu_0)^2}{(1+q)^2}
+
p\frac{\bigl[(r-\mu_0)+r_P(r-S_P)\bigr]^2}
{(1+q+r_P)^2}.
\end{aligned}
\]
The advisor compares $L^{\mathrm{fix}}(r_A)$, the loss without investment, with $L^{\mathrm{fix}}(\widetilde r_A)+\widetilde c$, the total loss under investment.

\textbf{Proposition (Trust investment threshold).} Investment is optimal if and only if $\widetilde c\leq\overline c$, where
\[
\begin{aligned}
\overline c
&=L^{\mathrm{fix}}(r_A)-L^{\mathrm{fix}}(\widetilde r_A)\\
&=(1-p)(r-\mu_0)^2
\left[
\frac{1}{(1+r_A)^2}
-\frac{1}{(1+\widetilde r_A)^2}
\right]\\
&\quad+
p\bigl[(r-\mu_0)+r_P(r-S_P)\bigr]^2
\left[
\frac{1}{(1+r_A+r_P)^2}
-\frac{1}{(1+\widetilde r_A+r_P)^2}
\right].
\end{aligned}
\]
The threshold $\overline c$ is affine in $p$. It is strictly increasing in $p$ when the reduction in decision loss from investment is larger in the consultation state than in the no-consultation state, strictly decreasing when it is smaller, and constant when the two reductions are equal.

\begin{proof}
Investment is optimal precisely when
\[
L^{\mathrm{fix}}(\widetilde r_A)+\widetilde c
\leq L^{\mathrm{fix}}(r_A),
\]
or equivalently,
\[
\widetilde c
\leq L^{\mathrm{fix}}(r_A)-L^{\mathrm{fix}}(\widetilde r_A)
=\overline c.
\]
Substituting the expression for $L^{\mathrm{fix}}$ gives the stated threshold. The two state-specific reductions in loss do not depend on $p$, so their weighted average is affine in $p$, with slope equal to the consultation-state reduction minus the no-consultation-state reduction.
\end{proof}

The threshold $\overline c$ measures the maximum cost the advisor is willing to pay for the increase in perceived precision. Higher personal-AI adoption shifts weight toward the consultation state, but does not necessarily increase the return to trust investment. The direction depends on how much additional advisor trust improves the decision in each state.

For example, if $\mu_0=r$ but $S_P\neq r$, the no-consultation decision already equals the target. Trust investment then has value only when personal AI is consulted, and the threshold increases with $p$. Conversely, if
\[
\frac{\mu_0+r_PS_P}{1+r_P}=r
\qquad\text{and}\qquad
\mu_0\neq r,
\]
the consultation-state decision already equals the target. Trust investment then has value only in the no-consultation state, and the threshold decreases with $p$. Thus, when recommendation content is constrained, personal-AI adoption can either strengthen or weaken the advisor's incentive to invest in credibility.

\section{Measurement and Estimation}
\label{subsec:measurement-estimation}

The experiment reports consultation, forecast revision, and absolute forecast error. Within-round error reduction is
\[
|y_t-x_{0,t}|-|y_t-x_{1,t}|,
\]
where $x_{0,t}$ and $x_{1,t}$ are the initial and final forecasts and $y_t$ is the realized outcome. Positive values indicate that the final forecast is closer to the outcome. This comparison describes changes within a round; it does not identify an advice effect relative to a no-advice condition. Confidence is elicited within rounds; the post-game survey records stated perceptions of the advice sources.

The relationships below connect observable forecast revisions to the model's advice weights under linear aggregation. The fitted coefficients describe average aggregation under the specified linear rule; their perceived-precision interpretation is conditional on that rule. To describe the relationships, we suppress the participant index. The initial forecast $x_{0,t}$ and final forecast $x_{1,t}$ correspond to the model's prior mean $\mu_0$ and decision $D$. The recommendations $S_{P,t}$ and $S_{E,t}$ denote personal AI and enterprise AI. The enterprise-AI precision $r_E$ corresponds to the focal advisor's precision $r_A$ in the theory.

\subsection{Single-Source Advice}

For a round in which only personal-AI advice is observed and $S_{P,t}\neq x_{0,t}$, the standard weight-of-advice measure is
\[
WOA_{P,t}=\frac{x_{1,t}-x_{0,t}}{S_{P,t}-x_{0,t}}.
\]
It measures the fraction of the distance from the initial forecast to the recommendation covered by the revision. Values of zero and one correspond to no revision and full adoption of the recommendation, respectively. Values between zero and one indicate partial adjustment toward the recommendation. Negative values and values above one are possible, and the ratio can be unstable when advice is close to the initial forecast.

Under the single-source aggregation rule,
\[
D_P=\frac{\mu_0+r_P S_P}{1+r_P}
=(1-w_P)\mu_0+w_P S_P,
\qquad w_P=\frac{r_P}{1+r_P}.
\]
Thus, WOA equals $w_P$ when the observed forecasts satisfy this rule. For $0\leq w_P<1$, the inverse relationship is
\[
r_P=\frac{w_P}{1-w_P}.
\]
This ratio is personal AI's perceived precision relative to the prior. It is distinct from $t$, which also accounts for the focal advisor's influence. If $\widehat w_P$ denotes a mean WOA in $[0,1)$, the corresponding transformation is
\[
\widehat r_P^{\mathrm{WOA}}=\frac{\widehat w_P}{1-\widehat w_P}.
\]
Because the transformation is nonlinear, this quantity is not the mean of the observation-level precision ratios. A weight of one corresponds to an unbounded precision ratio, and weights outside $[0,1]$ do not admit a nonnegative perceived-precision interpretation.

A constrained linear aggregation model provides an alternative representation:
\begin{align*}
x_{1,t}&=w_{0P}x_{0,t}+w_PS_{P,t}+\varepsilon_t,\\
&w_{0P}+w_P=1,\qquad w_{0P}\geq0,\quad w_P\geq0.
\end{align*}
Here, $w_{0P}$ and $w_P$ are the weights on the initial forecast and personal AI. If $\widehat w_{0P}>0$, the implied perceived-precision ratio is
\[
\widehat r_P^{\mathrm{reg}}
=\frac{\widehat w_P}{\widehat w_{0P}}
=\frac{\widehat w_P}{1-\widehat w_P}.
\]
The same single-source relationships apply to enterprise AI or human-expert advice when each is the only source observed.

\subsection{Multiple-Source Advice}

When enterprise AI and personal AI are both observed, the corresponding constrained model is
\begin{align*}
x_{1,t}&=w_0x_{0,t}+w_ES_{E,t}+w_PS_{P,t}+\varepsilon_t,\\
&w_0+w_E+w_P=1,\qquad w_0\geq0,\quad w_E\geq0,\quad w_P\geq0.
\end{align*}
For $\widehat w_0>0$, the implied perceived precisions are
\[
\widehat r_E=\frac{\widehat w_E}{\widehat w_0},
\qquad
\widehat r_P=\frac{\widehat w_P}{\widehat w_0}.
\]
Relative trust can then be written as
\[
\widehat t
=\frac{\widehat r_P}{1+\widehat r_E}
=\frac{\widehat w_P}{\widehat w_0+\widehat w_E}
=\frac{\widehat w_P}{1-\widehat w_P}.
\]
The direct weight ratio remains defined when $\widehat w_0=0$ provided that $\widehat w_0+\widehat w_E>0$, although the separate perceived-precision ratios are then not finite. The single-source and multiple-source transformations have the same algebraic form but describe different objects: $r_P$ compares personal AI with the prior, whereas $t$ compares it with the prior and focal advisor together.

These interpretations require the linear aggregation model and sufficient independent variation in advice relative to initial forecasts. In particular, separating the two advice weights requires that their forecast gaps are not collinear. The experiment includes two single-source rounds per source, limiting participant-specific estimation. We estimate pooled coefficients rather than participant-specific weights. For the 138 dual-source rounds, the two forecast-gap columns have full rank.

\subsection{Estimation and Weight-of-Advice Sensitivity}
\label{subsec:single-source-robustness}

The single-source estimator minimizes $\sum_{i,t}[x_{1,it}-x_{0,it}-w(S_{it}-x_{0,it})]^2$ over $w\in[0,1]$, separately for each source. The dual-source estimator minimizes the corresponding squared residuals over $w_E,w_P\geq0$ and $w_E+w_P\leq1$. Each round receives equal weight in these objectives. The single-source unconstrained solution is a weighted average of round-level WOA values with weights proportional to $(S_{it}-x_{0,it})^2$, before imposing the coefficient constraint.

Uncertainty intervals for the pooled weights use 20,000 bootstrap samples of participants, retaining each sampled participant's rounds and refitting the constrained model. Percentile intervals use the 2.5th and 97.5th percentiles. The relative-influence index is recomputed from each bootstrap fit. The dual-source point estimates imply $\widehat r_E=1.07$ and $\widehat r_P=0.80$ under the model's mapping.

For the supplementary WOA summary, we retain rounds with $|S_{it}-x_{0,it}|\geq5$, truncate the ratios to $[0,1]$, and average first within participants and then across participants. This retains 138 enterprise-AI rounds, 138 personal-AI rounds, and 135 human-expert rounds, with all 69 participants represented for every source. Figure~\ref{fig:single-source-weights} compares these summaries with the pooled estimates. Table~\ref{tab:woa-sensitivity} reports alternatives to the gap threshold and truncation rule.

\begin{figure}[!htbp]
\centering
\includegraphics[width=0.82\textwidth]{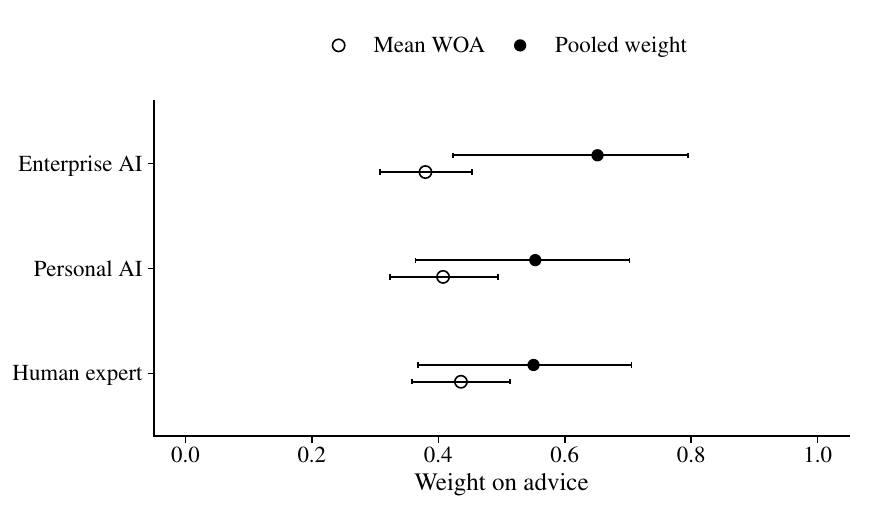}
\caption{Single-source advice weights under two summaries. Filled points report pooled constrained least-squares weights. Open points report participant-averaged WOA after retaining advice gaps of at least five units and truncating ratios to $[0,1]$. Intervals are 95\% participant bootstrap percentile intervals using 20,000 samples.}
\label{fig:single-source-weights}
\end{figure}

\begin{table}[!htbp]
\small\centering
\caption{Sensitivity of Participant-Averaged WOA}
\label{tab:woa-sensitivity}
\begin{tabular}{llccc}
\hline
Minimum absolute gap & Ratio treatment & Enterprise AI & Personal AI & Human expert \\
\hline
Nonzero & Untruncated & 0.38 & 0.41 & 0.44 \\
Nonzero & Truncated to $[0,1]$ & 0.38 & 0.41 & 0.42 \\
1 & Untruncated & 0.38 & 0.41 & 0.44 \\
1 & Truncated to $[0,1]$ & 0.38 & 0.41 & 0.42 \\
5 & Untruncated & 0.38 & 0.41 & 0.45 \\
5 & Truncated to $[0,1]$ & 0.38 & 0.41 & 0.44 \\
10 & Untruncated & 0.39 & 0.41 & 0.45 \\
10 & Truncated to $[0,1]$ & 0.39 & 0.41 & 0.43 \\
\hline
\end{tabular}
\begin{minipage}{0.92\textwidth}\vspace{0.5em}\footnotesize
Notes. Thresholds are inclusive, and zero denominators are always excluded. Ratios are averaged within participant before averaging across participants with at least one eligible round. All conditions retain 69 participants except the human-expert condition at a ten-unit threshold, which retains 68. Source configurations occur on different scenarios in a fixed order.
\end{minipage}
\end{table}

\begin{samepage}
Paired comparisons of the five-unit, truncated WOA summaries give differences of $-0.028$ for enterprise versus personal AI ($p=0.379$), $-0.056$ for enterprise AI versus the human expert ($p=0.178$), and $-0.028$ for personal AI versus the human expert ($p=0.478$). These are unadjusted exploratory comparisons across source configurations. They do not establish equivalence or identify a source-label effect.\par
\end{samepage}

\subsection{Consultation Counts and Access Outcomes}
\label{subsec:consultation-benchmark}

Table~\ref{tab:block2-access} reports the access rates, separating the two time limits used with ten-character codes. Successful access equals advice revelation in the code-required rounds.

\begin{table}[!htbp]
\small\centering
\caption{Personal-AI Access in Block 2}
\label{tab:block2-access}
\begin{tabular}{lrrrrr}
\hline
Code length & Time limit & Observations & Attempts & Advice revealed & Revealed (\%) \\
\hline
0 & None & 138 & -- & 138 & 100.0 \\
4 & 10 seconds & 138 & 74 & 70 & 50.7 \\
6 & 10 seconds & 138 & 68 & 60 & 43.5 \\
8 & 10 seconds & 138 & 63 & 55 & 39.9 \\
10 & 10 seconds & 138 & 59 & 41 & 29.7 \\
10 & 8 seconds & 138 & 58 & 27 & 19.6 \\
\hline
\end{tabular}
\begin{minipage}{0.92\textwidth}\vspace{0.5em}\footnotesize
Notes. Every condition contains two rounds from each of 69 participants. In zero-code rounds advice is displayed automatically. Conditions occur in the order shown. The pooled ten-character condition has 276 observations, 117 attempts, and 68 successful accesses, corresponding to attempt and revelation rates of 42.4\% and 24.6\%.
\end{minipage}
\end{table}

For code length $\ell$, let $K_{i\ell}$ be the participant's consultation count across $n_\ell$ rounds. The homogeneous independent-round benchmark is
\[
K_{i\ell}\sim\mathrm{Binomial}(n_\ell,p_\ell).
\]
We estimate $p_\ell$ by the sample consultation rate and compare the observed count frequencies with the fitted binomial frequencies using Pearson's statistic. Counts of three and four consultations are pooled for ten-character codes.

\begin{table}[!htbp]
\small\centering
\caption{Consultation Counts and the Fitted Binomial Benchmark}
\label{tab:block2-binomial-gof}
\begin{tabular}{lrrllr}
\hline
Code length & Rounds & Rate & Observed counts & Expected counts & $\chi^2$ \\
\hline
4 & 2 & 0.507 & $(28,12,29)$ & $(16.8,34.5,17.8)$ & 29.3 \\
6 & 2 & 0.435 & $(33,12,24)$ & $(22.0,33.9,13.0)$ & 28.8 \\
8 & 2 & 0.399 & $(34,15,20)$ & $(25.0,33.1,11.0)$ & 20.6 \\
10 & 4 & 0.246 & $(42,6,8,13)$ & $(22.3,29.1,14.3,3.4)$ & 66.2 \\
\hline
\end{tabular}
\begin{minipage}{0.92\textwidth}\vspace{0.5em}\footnotesize
Notes. Counts are for $K=0,1,2$ at lengths four, six, and eight, and $K=0,1,2,\geq3$ at length ten. Nominal degrees of freedom are 1, 1, 1, and 2. We simulate 200,000 samples of 69 participants from each fitted binomial distribution, re-estimate its probability using the unpooled counts in every sample, apply the same pooling rule, and recompute the statistic. No simulated statistic reaches the observed value in any condition. The plus-one Monte Carlo estimate is $1/200001\approx0.000005$ in each case; this is a simulation estimate, not an upper bound on the tail probability. The pooled upper expected count at length ten is below five, motivating simulation rather than reliance on the chi-square approximation. Comparisons are exploratory and unadjusted.
\end{minipage}
\end{table}

Figure~\ref{fig:access-outcomes}(b) reports the overall distribution across participants; Table~\ref{tab:block2-binomial-gof} reports the count distributions within code lengths. Differences across participants and dependence across their repeated decisions can both produce departures from the fitted benchmark.

\begin{samepage}
\subsection{Counteraction Comparison}
\label{subsec:counteraction-analysis}

The comparison uses the third group of Block 3 and includes only rounds with personal AI available. The group assigns each participant two counteracting rounds, one unadjusted round with personal AI, and one round without personal AI, shuffled across four scenarios. The first two groups contain no unadjusted enterprise advice alongside personal AI and are excluded from the policy comparison.\par
\end{samepage}

\begin{figure}[!htbp]
\centering
\includegraphics[width=0.84\textwidth]{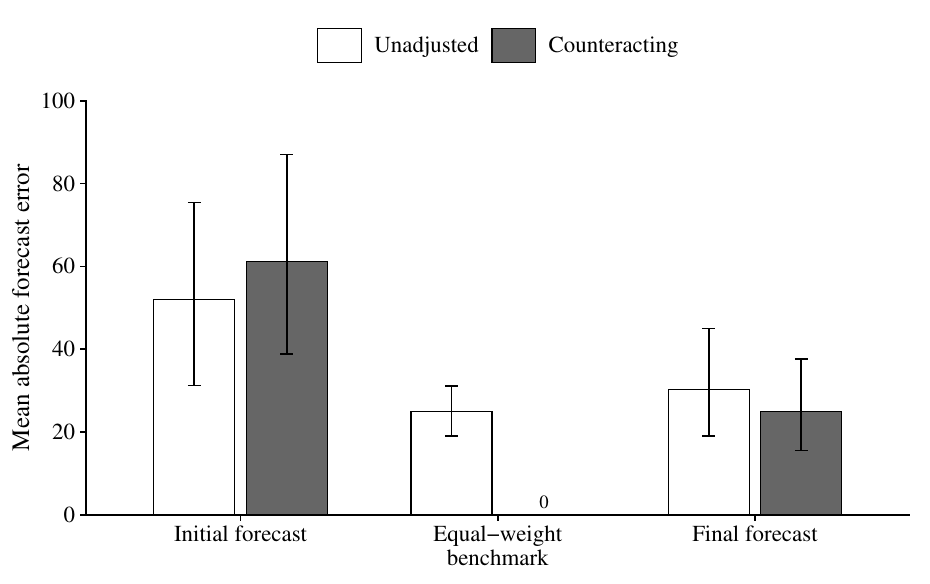}
\caption{Forecast error under counteracting and unadjusted advice. The comparison uses the third group of Block 3, restricted to rounds with personal AI available: two counteracting rounds and one unadjusted round per participant. Bars are means of participant-specific condition averages; intervals are 95\% participant bootstrap intervals. The equal-weight benchmark averages the two displayed recommendations and is a constructed benchmark, not a participant forecast. Its error under counteraction is zero by construction.}
\label{fig:block3-counteraction}
\end{figure}

For each participant, we subtract absolute error in the unadjusted round from average absolute error in the two counteracting rounds. The mean difference is $-5.46$, with a paired standard deviation of 41.55. The main-text interval uses the $t$ distribution with 68 degrees of freedom. A two-sided paired $t$ test gives $p=0.279$; an exploratory signed-rank test gives $p=0.373$. The two tests concern different features of the paired distribution. Thirty-seven of 69 participants have a lower condition-average error under counteraction. Figure~\ref{fig:counteraction-distribution} shows the complete distribution of differences.

\begin{table}[!htbp]
\small\centering
\caption{Sensitivity of the Counteraction Comparison}
\label{tab:counteraction-specifications}
\begin{tabular}{lrrr}
\hline
Specification & Difference & Standard error & $p$-value \\
\hline
Paired participant means & $-5.46$ & 5.00 & 0.279 \\
Participant fixed effects & $-5.46$ & 6.13 & 0.376 \\
Participant and scenario fixed effects & $-7.27$ & 5.85 & 0.218 \\
\hline
\end{tabular}
\begin{minipage}{0.92\textwidth}\vspace{0.5em}\footnotesize
Notes. Differences are counteracting minus unadjusted mean absolute final error. Regressions use 207 rounds from 69 participants and participant-clustered HC1 standard errors, including the finite-sample correction for the number of estimated coefficients. Reported $p$-values use 68 degrees of freedom. The first row uses the standard error of the 69 paired differences. Different finite-sample corrections account for the standard-error difference between the first two rows.
\end{minipage}
\end{table}

\begin{figure}[!htbp]
\centering
\includegraphics[width=0.86\textwidth]{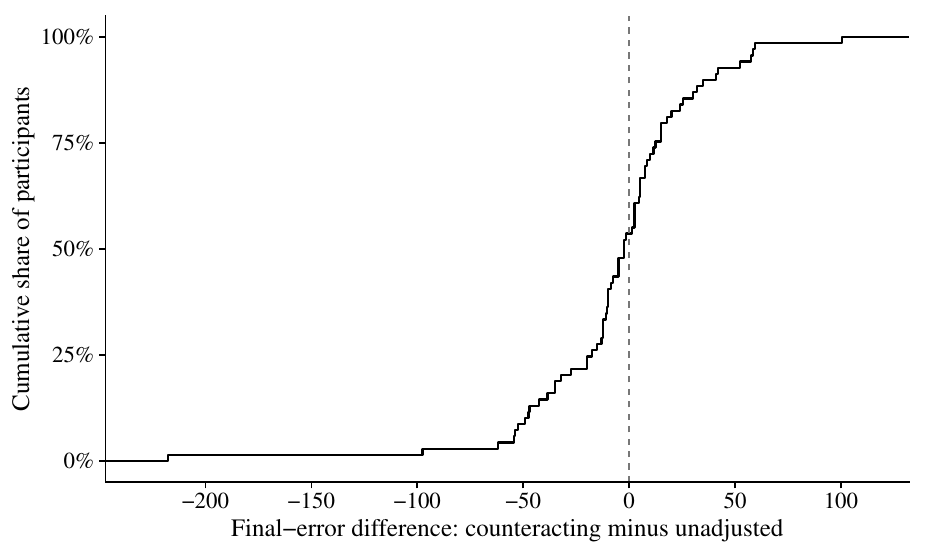}
\caption{Distribution of participant-level differences in final error. Each observation subtracts a participant's unadjusted-round error from their average error across two counteracting rounds in the third group of Block 3. The empirical cumulative distribution includes all 69 differences, ranging from $-217.5$ to $100.5$. The vertical reference marks zero.}
\label{fig:counteraction-distribution}
\end{figure}

Applying the Block 1 pooled weights to the counteracting rounds gives a predicted mean absolute error of 22.16, compared with observed error of 24.88. The retained-initial-judgment component has mean absolute magnitude 21.28, while the unequal-advice-weight component has magnitude 5.26. Their absolute magnitudes are not additive because the components can reinforce or offset one another. These calculations hold the Block 1 coefficients fixed; they are an illustration of the aggregation identity rather than an estimated decomposition of observed errors. The predicted forecasts differ from observed final forecasts by 30.20 units on average in absolute value.

With both recommendations observed, the Block 1 point estimates would cancel the personal-AI term at intensity $\widehat w_P/\widehat w_E=0.75$. Its 95\% bootstrap interval is $[0.46,1.29]$, which includes the implemented intensity of one. This cancellation ratio is conditional on the fitted aggregation rule and is distinct from the optimal policy under uncertain consultation.

\subsection{Sample Sensitivity}
\label{subsec:sample-sensitivity}

The main analyses use the common 69-participant sample. Table~\ref{tab:sample-sensitivity} repeats the principal summaries using all 74 complete participant records, including five additional participants. The sample definitions are held constant across blocks within each analysis.

\begin{table}[!htb]
\small\centering
\caption{Sensitivity to Including Five Additional Participants}
\label{tab:sample-sensitivity}
\begin{tabular}{lcc}
\hline
Summary & 69 participants & 74 participants \\
\hline
Single-source enterprise weight & 0.65 & 0.70 \\
Single-source personal-AI weight & 0.55 & 0.58 \\
Single-source human-expert weight & 0.55 & 0.56 \\
Dual-source initial-forecast weight & 0.35 & 0.35 \\
Dual-source enterprise weight & 0.37 & 0.35 \\
Dual-source personal-AI weight & 0.28 & 0.30 \\
Relative-influence index $\widehat t$ & 0.39 & 0.43 \\
Counteraction difference & $-5.46$ & $-5.48$ \\
95\% paired interval & $[-15.44,4.52]$ & $[-15.00,4.04]$ \\
Advice revealed, 4-character codes (\%) & 50.7 & 51.4 \\
Advice revealed, 6-character codes (\%) & 43.5 & 45.9 \\
Advice revealed, 8-character codes (\%) & 39.9 & 41.9 \\
Advice revealed, 10-character codes (\%) & 24.6 & 26.4 \\
\hline
\end{tabular}
\begin{minipage}{0.92\textwidth}\vspace{0.5em}\footnotesize
Notes. Both columns use the same estimators and round restrictions. The counteraction difference compares participant-specific mean absolute final errors in the third group of Block 3, conditional on personal-AI availability. Intervals use the paired $t$ distribution. Ten-character access rates pool both time limits.
\end{minipage}
\end{table}

\end{APPENDICES}

\end{document}